\newtheorem{theorem}{Theorem}
\DeclareMathAlphabet\mathbfcal{OMS}{cmsy}{b}{n}
\newcommand{\E}{\mathbb{E}}
\newcommand{\hist}{\mathcal{H}}
\newcommand{\V}{\mathbb{V}}
\newcommand{\Prob}{\mathbb{P}}
\newcommand{\bigO}{\mathcal{O}}
\newcommand{\prob}{\Prob}
\newcommand{\var}{\text{Var}}
\newcommand{\ind}{\mathbbm{1}}
\newcommand{\query}[1]{$\text{Q}{#1}$}
\newcommand{\bfquery}[1]{$\mathbfcal{Q}_{#1}$}
\title{Predictive Querying for Autoregressive Neural Sequence Models}
\author{%
Alex Boyd\thanks{Authors contributed equally}$^{\ \ 1}$ \quad Sam Showalter$^{* \ \! 2}$ \quad Stephan Mandt$^{1,2}$ \quad Padhraic Smyth$^{1,2}$ \\
$^1$Department of Statistics \quad $^2$Department of Computer Science \\
University of California, Irvine \\
\texttt{\{alexjb,showalte,mandt,p.smyth\}@uci.edu}\\
}
\begin{document}

\maketitle

\begin{abstract}
In reasoning about sequential events it is natural to pose probabilistic queries  such as ``when will event A occur next'' or ``what is the probability of A occurring before B'', with applications in areas such as user modeling, medicine, and finance. However, with machine learning shifting towards neural autoregressive models such as RNNs and transformers, probabilistic querying has been largely restricted to simple cases such as next-event prediction. This is in part due to the fact that  future querying involves marginalization over large path spaces, which is not straightforward to do efficiently in such  models. In this paper we introduce a general typology for predictive queries in neural autoregressive sequence models and show that such queries can be systematically represented by sets of elementary building blocks. We leverage this typology to develop new query estimation methods based on beam search, importance sampling, and hybrids. Across four large-scale sequence datasets from different application domains, as well as for the GPT-2 language model, we demonstrate the ability to make query answering tractable for arbitrary queries in exponentially-large predictive path-spaces, and find clear differences in cost-accuracy tradeoffs between search and sampling methods. 
\end{abstract}

\section{Introduction}



One of the major successes in machine learning in recent years has been the development of neural sequence models for categorical sequences, particularly in natural language applications but also in other  areas such as automatic code generation and program synthesis \citep{shin2019program,chen2021evaluating}, computer security \citep{brown2018recurrent},  recommender systems \citep{wu2017recurrent}, genomics \citep{shin2021protein,amin2021generative}, and survival analysis \citep{lee2019dynamic}. Many of the  models (although not all) 
rely on autoregressive training and prediction, allowing for the sequential generation of sequence completions in a recursive manner conditioned on sequence history.

A natural question in this context is  how to compute answers to predictive queries that go beyond   traditional one-step-ahead predictions. Examples of such  queries are ``how likely is event $A$ to occur before event $B$?'' and ``how likely is event $C$ to occur (once or more) within the next $K$ steps of the sequence?'' These types of queries are very natural across a wide variety of application contexts, for example, the probability that an individual will 
finish speaking or writing a sentence within the next $K$ words, or that a user will use one app before another. See \cref{fig:flashy_example} and \cref{sec:app_modeling_details} for examples.

\begin{figure}
    \centering
    \includegraphics[width=0.8\textwidth]{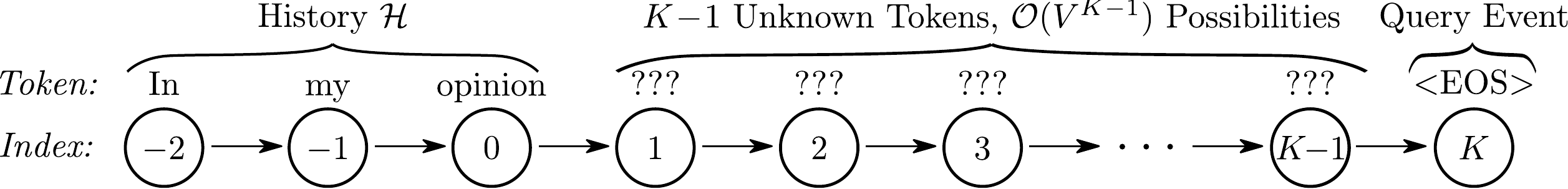} \\
    \vspace{0.5cm}
    \includegraphics[width=1.0\textwidth]{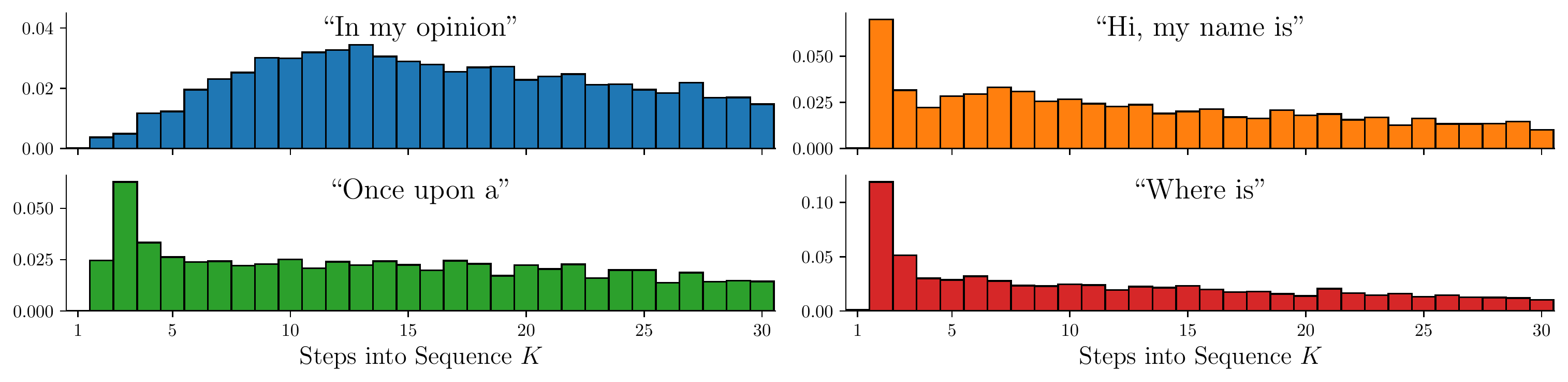}
    \caption{(top) Illustration of a query for the probability of a given sentence "In my opinion..." ending in $K$ steps. (bottom) GPT-2 query estimates across 4 prefixes with $V=50,257, K\leq30$. Importance sampling query estimates maintain a 6x reduction in variance relative to naive model sampling for the same computation budget. Open-ended prefixes (top-left) generally possess longer-tailed distributions relative to simple prefixes. Additional details provided in Sections 4, 5, and \cref{sec:gpt_exp}.}
    
    \label{fig:flashy_example}
\end{figure}


In this paper we develop a general framework for answering such predictive queries in the context of autoregressive (AR) neural sequence models. This amounts to computing conditional probabilities of propositional statements about future events, conditioned on the history of the sequence  as summarized by the current hidden state representation. We focus in particular on how to perform near real-time computation of such queries, motivated by use-cases such as answering human-generated queries and utilizing query estimates within the optimization loop of training a model. Somewhat surprisingly, although there has been extensive prior work on multivariate probabilistic querying in areas such as graphical models and database querying, as well as for restricted types of queries for traditional sequence models such as Markov models,  querying for neural sequence models appears to be  unexplored. One possible reason is that the problem is computationally intractable in the general case (as we discuss later in Section \ref{sec:queries}), typically scaling as $\bigO \! \left(V^{K-1}\right)$ or worse for predictions $K$-steps ahead, given a sequence model with a $V$-ary alphabet (e.g. compared to $\bigO \! \left(KV^2\right)$ for Markov chains).


Our contributions  are three-fold:
\begin{enumerate}[leftmargin=20pt]
\itemsep0em 
\item We introduce and develop the problem of predictive querying in neural sequence models by reducing complex queries to elementary building blocks. These elementary queries define restricted path spaces over future event sequences.  
\item We show that the underlying autoregressive model can always be constrained to the restricted path space satisfying a given query. This gives rise to a new proposal distribution that can be used for importance sampling, beam search, or a new hybrid approach. 
\item We evaluate these methods across three user behavior and two language data datasets. While all three methods significantly improve over naive forward simulation, the hybrid approach further improves over importance sampling and beam search. We furthermore explore how the performance of all methods relates to the model entropy.

\end{enumerate}  
Code for this work is available at \href{https://github.com/ajboyd2/prob_seq_queries}{https://github.com/ajboyd2/prob\_seq\_queries}.
\section{Related Work}
\label{sec:related}

Research on efficient computation of probabilistic queries has a long history in machine learning and AI, going back to   work on exact inference in multivariate  graphical models  \citep{pearl1988probabilistic,koller2009probabilistic}.
Queries in this context are typically of two types. The first are {\it conditional probability queries}, the focus of our attention here: computing probabilities defined for a subset $X$ of variables of interest, conditioned on a second subset $Y=y$ of observed variable values, and marginalizing over the set $Z$ of all other variables. The second type of queries can broadly be referred to as {\it assignment queries}, seeking the most likely (highest conditional probability) assignment of values $x$ for $X$, again conditioned on $Y=y$ and marginalizing over the set $Z$. Assignment queries are also referred to as most probable explanation (MPE) queries, or as maximum a posteriori (MAP) queries when $Z$ is the empty set \citep{koller2007graphical}.
 
For models that can be characterized with sparse  Markov dependence structure,   there is a significant body of work on efficient inference algorithms that can leverage such structure  \citep{koller2009probabilistic},  in particular for sequential models where   recursive computation can be effectively leveraged \citep{bilmes2010dynamic}. However, autoregressive neural sequence models are inherently non-Markov since the real-valued current hidden state is a function of the entire history of the sequence. Each hidden state vector induces a tree containing $V^k$ unique future trajectories with state-dependent probabilities for each path. Techniques such as dynamic programming (used effectively in Markov-structured sequence models) are not applicable in this context, and both assignment queries and conditional probability queries are NP-hard in general  \cite{chen2018recurrent}.

For assignment-type queries  there has been considerable work in natural language processing with neural sequence models, particularly for the MAP problem of generating high-quality/high-probability  sequences conditioned on sequence history or other conditioning information. A variety of heuristic decoding methods have been developed and found to be useful in practice, including   beam search \citep{sutskever2014sequence},  best-first search \citep{xu2021massive}, sampling methods \citep{holtzman2019curious}, and hybrid variants \citep{shaham2021you}.  However, for conditional probability queries with neural sequence models (the focus of this paper), there has been no prior work in general on this problem to our knowledge. While  decoding techniques such as beam search can also be useful in the context of conditional probability queries, as we will see later in Section \ref{sec:methods}, such techniques have significant limitations in this context, since by definition they produce lower-bounds on the probabilities of interest and, hence, are biased estimators.



\section{Probabilistic Queries}
\label{sec:queries}

\paragraph{Notation}
Let $X_{1:N} := [X_1, X_2, \dots, X_N]$ be a sequence of random variables with arbitrary 
length $N$. Additionally, let $x_{1:N} := [x_1, x_2, \dots, x_N]$ be their respective observed values where each $x_i$  takes on values from a fixed vocabulary $\mathbb{V} := \{1, \dots, V\}$. 
Examples of these sequences include sentences where each letter or word is a single value, or streams of discrete events generated by some process or user. 
We will refer to individual variable-value pairs in the sequence as events.

We consider an autoregressive model $p_\theta(X_{1:N}) = \prod_{i=1}^N p_\theta(X_i|X_{1:i-1})$, parameterized by $\theta$ and trained on a given dataset 
of $M$ independent draws from a ground truth distribution $\prob(X_{1:N})$. 
We assume that this model can be conditioned on a subsequence, termed the \emph{history} $\hist$. We will remap the indices of subsequent random variables to start at 1\footnote{For example, if $|\hist|=3$ then $\prob(X_4|\hist)$ is the distribution of the $7^{\text{th}}$ value in a sequence after conditioning on the first 3 values in the sequence.}.
We abbreviate conditioning on a history 
by an asterisk ${}^*$, i.e., $\prob^*(\cdot) := \prob(\cdot|\hist)$ and $p_\theta^*(\cdot) := p_\theta(\cdot|\hist)$.

\begin{figure}
    \centering
    \includegraphics[width=\textwidth]{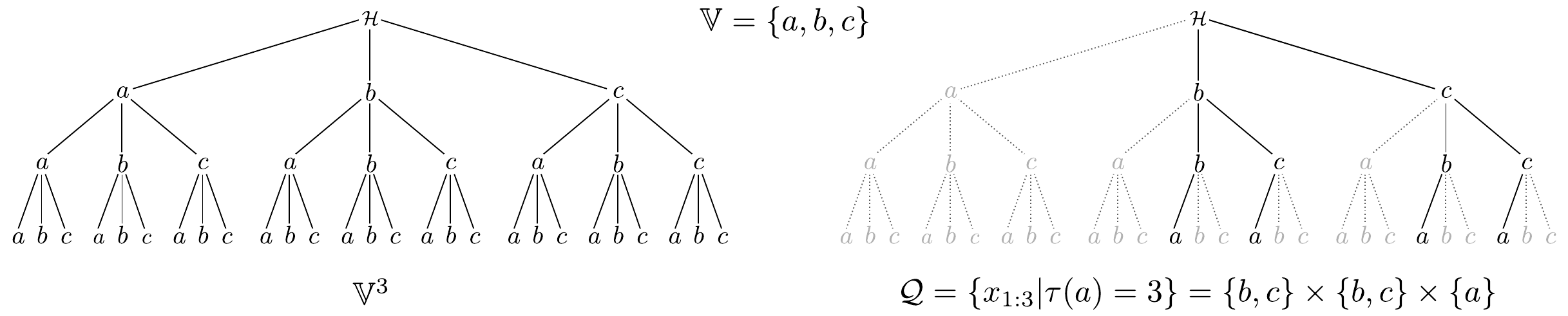}
    \caption{(left) Tree diagram of the complete sequence space for a vocabulary $\V=\{a,b,c\}$ and the corresponding query space $\mathcal{Q}$ (right) for when \emph{the first appearance of $a$} occurs on the third step (i.e., $\tau(a)=3$), defined as the set product of restricted domains listed below the figure.} 
    \label{fig:query_example}
\end{figure}

\paragraph{Defining Probabilistic Queries}
Given a specific history of events $\hist$, there are a variety of different questions one could ask about the future continuing where the history left off: ($\text{Q}1$) What event is likely to happen next? ($\text{Q}2$) Which events are   likely to occur $k>1$ steps from now? ($\text{Q}3$) What is the distribution of when the next instance of $a \in \V$ occurs? ($\text{Q}4$) How likely is it that we will see event $a\in\V$ occur before $b\in\V$? ($\text{Q}5$) 
How likely is it for $a\in\V$ to occur $n$ times in the next $k$ steps?

We define a common framework for such queries  by defining probabilistic queries to be of the form
$p_\theta^*(X_{1:K} \in \mathcal{Q})$
with $\mathcal{Q}\subset \V^K$. This 
can   be extended to the infinite setting (i.e., $p_\theta^*([X_k]_k \in \mathcal{Q})$ where $\mathcal{Q}\subset \V^\infty$). Exact computation of an arbitrary query is straightforward to represent:
\begin{align}
p_\theta^*(X_{1:K} \in \mathcal{Q}) & = \sum_{x_{1:K}\in\mathcal{Q}} p_\theta^*(X_{1:K} = x_{1:K})  = \sum_{x_{1:K}\in\mathcal{Q}} \prod_{k=1}^K p_\theta^*(X_{k} = x_{k} | X_{<k}=x_{<k}). \label{eq:exact_computations_cond}
\end{align}
Depending on $|\mathcal{Q}|$, performing this calculation can quickly become intractable, 
motivating lower bounds or approximations (developed in more detail in \cref{sec:methods}). In this context it is helpful to impose structure on the query $\mathcal{Q}$ to make subsequent estimation  easier, in particular by
breaking $\mathcal{Q}$ into the following structured partition:
\begin{align}
\mathcal{Q} & = \cup_i \mathcal{Q}^{(i)} \text{ where } \mathcal{Q}^{(i)} \cap \mathcal{Q}^{(j)} = \emptyset \text{ for } i\neq j \label{eq:structure_1} \\
\text{and } \mathcal{Q}^{(i)} & = \mathcal{V}^{(i)}_1 \times \mathcal{V}^{(i)}_2 \times \dots \times \mathcal{V}^{(i)}_K \text{ where } \mathcal{V}^{(i)}_k \subseteq \mathbb{V} \text{ for } k=1,\dots,K. \label{eq:structure_2}
\end{align}
In words, this means a given query $\mathcal{Q}$ can be broken into a partition of simpler queries $\mathcal{Q}^{(i)}$ which take the form of a set cross product between restricted domains $\mathcal{V}^{(i)}_k$, one domain for each token $X_k$.\footnote{Ideally, the partitioning is chosen to have the smallest number of $\mathcal{Q}^{(i)}$'s needed.} An illustration of an example query set can be seen in \cref{fig:query_example}. A natural consequence of this is that:
\begin{align*}
p_\theta^*(X_{1:K}\in\mathcal{Q}) & = \sum_i p_\theta^*\left(X_{1:K}\in\mathcal{Q}^{(i)}\right) = \sum_i p_\theta^*\left(\cap_{k=1}^K X_k \in\mathcal{V}^{(i)}_k \right),
\end{align*}
which lends itself to more easily estimating each term in the sum. This will be discussed in  \cref{sec:methods}.




\begin{table}[]
    \centering
    \begin{tabular}{clll}
    \toprule
    \# & Question & Probabilistic Query & Cost $\left(K\cdot|\mathcal{Q}|\right)$ \\
    \midrule
    \vspace{2pt} 
    \query{1} & Next event? & $p_\theta^*(X_1)$ & $\bigO (1)$\\
    \vspace{2pt} 
    \query{2} & Event $K$ steps from now? & $p_\theta^*(X_K)$ & $\bigO \! \left(V^{K-1}\right)$ \\
    \query{3} & Next instance of $a$? & $p_\theta^*(\tau(a)=K)$ & $\bigO \! \left((V-1)^{K-1}\right)$ \\
    \vspace{2pt} 
    \query{4} & Will $a$ happen before $b$? & $p_\theta^*(\tau(a) < \tau(b))$ &  $\bigO \! \left((V-2)^{K}\right)^\dagger$ \\
    \query{5} & How many instances of $a$ in $K$ steps? & $p_\theta^*(N_a(K)=n)$ &  $\bigO \! \left(\binom{K}{n}(V-1)^{K-n}\right)$ \\
    \bottomrule
    \end{tabular}
    \caption{List of example questions, corresponding probabilistic queries, and associated costs of exact computation computation with an autoregressive model. 
    The cost of accommodating a history $\hist$ is assumed to be an additive constant for all queries. 
    While \query{4} extends to infinite time, the cost reported is for computing a lower bound up to $K$ steps. }
    \label{table:prob_query_examples}
\end{table}

\paragraph{Queries of Interest}
All of the queries posed  earlier in this section can be represented under the framework detailed in \cref{eq:structure_1} and \cref{eq:structure_2}, as illustrated in \cref{table:prob_query_examples}.

$\textbf{Q}\mathbf{1}$ \& $\textbf{Q}\mathbf{2}$ \quad The queries $p^*_\theta(X_1=a)$ and $p^*_\theta(X_K=a)$ for some $a \in \V$ can be represented with $\mathcal{Q}=\{a\}$ and $\mathcal{Q}=\V^{K-1}\times \{a\}$ respectively. 

$\textbf{Q}\mathbf{3}$ \quad The probability of the next instance of $a \in \V$ occurring at some point in time $K\geq 1$, $p^*_\theta(\tau(a)=K)$ where $\tau(\cdot)$ is the \emph{hitting time}, can be represented as $\mathcal{Q}=(\V\setminus\{a\})^{K-1}\times\{a\}$. This can   be adapted for a set $A\subset\V$ by replacing $\{a\}$ in $\mathcal{Q}$ with $A$.

$\textbf{Q}\mathbf{4}$ \quad The probability of $a\in\V$ occurring before $b\in\V$, $p^*_\theta(\tau(a)<\tau(b))$, is represented as $\mathcal{Q}=\cup_{i=1}^\infty\mathcal{Q}^{(i)}$ where $\mathcal{Q}^{(i)}=(\V\setminus\{a,b\})^{i-1}\times\{a\}$. Lower bounds to this can be computed by limiting $i<i'$. Like Q3, this can also be extended to disjoint sets $A,B\subset\V$. 

$\textbf{Q}\mathbf{5}$ \quad The probability of $a\in\V$ occurring $n$ times in the next $K$ steps, $p^*_\theta(N_a(K)=n)$, is represented as $\mathcal{Q}=\cup_{i=1}^{C(K,n)} \mathcal{Q}^{(i)}$, where  $N_a(K)$ is a random variable for the number of occurrences of events of type $a$ from steps $1$ to $K$ and 
$\mathcal{Q}^{(i)}$'s are defined to cover all unique permutations of orders of products composed of: $\{a\}^n$ and $(\V\setminus\{a\})^{K-n}$.
Like above, this can easily be extended for $A\subset\V$.

\paragraph{Query Complexity}
From \cref{eq:exact_computations_cond}, exact computation of a query involves computing $K\cdot|Q|$ conditional distributions  (e.g., $p_\theta^*(X_k|X_{<k}=x_{<k})$) in an autoregressive manner. 
Under the structured representation, the number of conditional distributions needed is equivalently $\sum_i \prod_{k=1}^K |\mathcal{V}_k^{(i)}|$.
Non-attention based neural sequence models often define $p_\theta^*(X_k|X_{<k}=x_{<k}) := f_\theta(h_k)$ where $h_k = \text{RNN}_\theta(h_{k-1}, x_{k-1})$. As such, the computation complexity for any individual conditional distribution remains constant with respect to sequence length. We will refer to the complexity of this atomic action as being $\bigO(1)$. Naturally, the actual complexity depends on the model architecture and has a multiplicative scaling on the cost of computing a query. The number of atomic operations needed to exactly compute \query{1}-\query{5} for this class of models can be found in \cref{table:prob_query_examples}. Should $p_\theta$ be an attention based model (e.g., a transformer \citep{vaswani2017attention}) then the time complexity of computing a single one-step-ahead distribution becomes $\bigO(K)$, further exacerbating the \textbf{exponential growth} of many queries.
Note that with some particular parametric forms of $p_\theta$, query computation can be more efficient, e.g., see \cref{sec:markov_analysis} 
for a discussion on query complexity for Markov models.

\section{Query Estimation Methods}\label{sec:methods}

Since exact query computation can scale exponentially in $K$ 
it is natural to consider approximation methods. In particular we focus on importance sampling, beam search, and a hybrid of both methods. All methods will be based on a novel proposal distribution, discussed below. 

\subsection{Proposal Distribution}
\label{sec:proposal}
For various estimation methods which will be discussed later, it is beneficial to have a proposal distribution $q(X_{1:K}=x_{1:K})$ whose domain matches that of the query $\mathcal{Q}$. For importance sampling, we will need this distribution as a proposal distribution, while we use it as our base model for selecting high-probability sequences in beam search. 
We would like the proposal distribution to resemble our original model while also respecting the query. 
One thought is to have 
$q(X_{1:K}=x_{1:K}) = p^*_\theta(X_{1:K}=x_{1:K}|X_{1:K}\in\mathcal{Q})$. However, computing 
this probability involves normalizing over  $p^*_\theta(X_{1:K}\in\mathcal{Q})$ which is exactly what we are trying to estimate in the first place. 
Instead of restricting the \emph{joint} distribution to the query, we can instead restrict every \emph{conditional} distribution to the query's restricted domain. To see this, we 
first partition $\mathcal{Q}=\cup_i\mathcal{Q}^{(i)}$ and define an autoregressive proposal distribution for each $\mathcal{Q}^{(i)}=\prod_{k=1}^K \mathcal{V}^{(i)}_k$ as follows:
\begin{align}
q^{(i)}(X_{1:K}=x_{1:K})& =\prod_{k=1}^K p^*_\theta\left(X_k=x_k|X_{<k}=x_{<k},X_k\in\mathcal{V}_k^{(i)}\right) \label{eq:proposal_dist} \\    
 & = \prod_{k=1}^K \frac{p^*_\theta \! \left(X_k=x_k|X_{<k}=x_{<k}\right)\ind \!  \left(x_k\in\mathcal{V}_k^{(i)}\right)}{\sum_{v\in\mathcal{V}_k^{(i)}} \nonumber p^*_\theta(X_k=v|X_{<k}=x_{<k})}
\end{align}
where $\ind(\cdot)$ is the indicator function. That is, we constrain the outcomes of each conditional probability to the restricted domains $\mathcal{V}_k^{(i)}$ and renormalize them accordingly. To evaluate the proposal distribution's probability, we multiply all conditional probabilities according to the chain rule. 
Since the entire distribution is computed for a single model call $p^*_\theta(X_k|X_{<k}=x_{<k})$, it is possible to both sample a $K$-length sequence and compute its likelihood under $q^{(i)}$ with only $K$ model calls. Thus, we can efficiently sample sequences from a distribution that is both informed by the underlying model $p_\theta$ and that respects the given domain $\mathcal{Q}$. As discussed in the next section, this proposal will be used for importance sampling and for the base distribution on which beam search is conducted.

\subsection{Estimation Techniques}

\paragraph{Sampling}
One can naively sample any arbitrary probability value using Monte Carlo samples to estimate $p^*_\theta(X_{1:K}\in\mathcal{Q}) = \E_{p^*_\theta}[\ind(X_{1:K}\in\mathcal{Q})]$; however, this typically will have high variance. This can be substantially improved upon by exclusively drawing sequences from the query space $\mathcal{Q}$.
Arbitrary queries can be written as a sum of probabilities of individual sequences, as seen in \cref{eq:exact_computations_cond}. This summation can be equivalently written as an expected value,
\begin{align*}
p^*_\theta(X_{1:K}\in\mathcal{Q}) & = \sum_{x_{1:K}\in\mathcal{Q}}p^*_\theta(X_{1:K}=x_{1:K})  = |\mathcal{Q}| \; \E_{x_{1:K}\sim \mathcal{U}(\mathcal{Q})} \left[p^*_\theta(X_{1:K}=x_{1:K})\right],
\end{align*}
where $\mathcal{U}$ is a uniform distribution. It is common for $p_\theta^*$ to concentrate most of the available probability mass on a small subset of the total possible space $\V^K$. Should $|\mathcal{Q}|$ be large, then  $|\mathcal{Q}|\;p^*_\theta(X_{1:K}=x_{1:K})$ will have very large variance for $x_{1:K}\sim \mathcal{U}(\mathcal{Q})$. One way to mitigate this is to use importance sampling with our proposal distribution $q$ (Section~\ref{sec:proposal}):
\begin{align*}
p^*_\theta(X_{1:K}\in\mathcal{Q}) & = |\mathcal{Q}| \; \E_{x_{1:K}\sim \mathcal{U}(\mathcal{Q})} \left[p^*_\theta(X_{1:K}=x_{1:K})\right]  = \E_{x_{1:K}\sim q} \left[\frac{p^*_\theta(X_{1:K}=x_{1:K})}{q(X_{1:K}=x_{1:K})}\right] \\
& \approx \frac{1}{M} \sum_{m=1}^M \frac{p^*_\theta \! \left(X_{1:K}=x_{1:K}^{(m)}\right)}{q \! \left(X_{1:K}=x_{1:K}^{(m)}\right)} \quad\quad\text{ for } x_{1:K}^{(1)},\dots,x_{1:K}^{(M)}\overset{iid}{\sim} q.
\end{align*}
It is worth noting that this estimator could be further improved by augmenting the sampling process to produce samples without replacement from $q$ 
(e.g., \citep{meister2021conditional,pmlr-v97-kool19a,shi2020incremental}); for the purposes of the scope of this paper we focus on sampling with replacement.

\paragraph{Search}
An alternative to estimating a query by sampling is to instead produce a lower bound,
\begin{align*}
p^*_\theta(X_{1:K}\in\mathcal{Q}) = \sum_{x_{1:K}\in\mathcal{Q}}p^*_\theta(X_{1:K}=x_{1:K}) \geq \sum_{x_{1:K}\in\mathcal{B}}p^*_\theta(X_{1:K}=x_{1:K}),
\end{align*}
where $\mathcal{B} \subset \mathcal{Q}$.
In many situations, only a small subset of sequences $x_{1:K}$ in $\mathcal{Q}$ have a non-negligible probability of occurring due to the vastness of the total path space $V^K$. As such, it is possible for $|\mathcal{B}| \ll |\mathcal{Q}|$ while still having a minimal gap between the lower bound and the actual query value.

One way to produce a set $\mathcal{B}\subset\mathcal{Q}$ is through beam search.
To ensure that beam search only explores the query space, instead of searching with $p_\theta$ we utilize $q$ for ranking beams. Since beam search is a greedy algorithm and for a given conditional $q(X_k=a|X_{<k})\propto p_\theta^*(X_k=a|X_{<k})$ for $a\in\mathcal{V}_k$, the rankings will both respect the domain and be otherwise identical to using $p_\theta$ to rank.
Typically, the goal of beam search is to find the most likely completion of a sequence without having to explore the entire space of possible sequences. This is accomplished by greedily selecting the top-$B$ most likely next step continuations, or \emph{beams}, at each step into the future. Rather than finding a few high-likelihood beams, we are more interested in accumulating a significant amount of probability and less so in the specific quantity of beams collected.


Traditional beam search has a fixed beam size $B$ that is considered for its search; however, this is not ideal for collecting probability mass. 
%
As an alternative we develop \emph{coverage-based} beam search where at each step in a sequence we restrict the set of beams being considered not to the top-$B$ but rather to the smallest set of beams that collectively exceed a predetermined probability mass $\alpha$,  referred to as the ``coverage''.\footnote{This is similar to the distinction between top-$K$ and top-$p$ / nucleus sampling commonly used for natural language generation \citep{holtzman2019curious}.} More specifically, let $\mathcal{B}_k \subset \{x_{1:k}|x_{1:K}\in\mathcal{Q}\}$ be a set containing $|\mathcal{B}_k|$ beams for subsequences of length $k$. For brevity, we will assume that $\mathcal{Q}=\mathcal{V}_1\times \cdots \times \mathcal{V}_K$.\footnote{If $\mathcal{Q}$ requires partitioning into multiple $\mathcal{Q}^{(i)}$'s, we apply beam search to each sub query $p^*_\theta(X_{1:K}\in\mathcal{Q}^{(i)})$.} $\mathcal{B}_{k+1}$ is a subset of $\mathcal{B}_k \times \mathcal{V}_{k+1}$ and is selected specifically to minimize $|\mathcal{B}_{k+1}|$ such that $q(X_{1:k+1}\in\mathcal{B}_{k+1}) \geq \alpha$. It can be shown that $p_\theta^*(X_{1:K}\in\mathcal{Q})-p_\theta^*(X_{1:K}\in\mathcal{B}_K) \leq 1-q(X_{1:K}\in\mathcal{B}_K)$ (and is often quite less). See 
\cref{sec:bs_upper_proof}
for a proof.

There is one slight problem with having $\alpha$ be constant throughout the search. Since we are pruning based on the joint probability of the entire sequence, any further continuations of $\mathcal{B}_k$ will reduce the probability $q(X_{1:k+1}\in\mathbb{B}_{k+1})$. 
This may lead to situations in which every possible candidate sequence is kept in order to maintain minimal joint probability coverage.
This can be avoided by filtering by $\alpha_k$ where $\alpha_1 > \dots > \alpha_K=\alpha$, e.g., the geometric series $\alpha_k=\alpha^{k/K}$.

\paragraph{A Hybrid Approach}
Importance sampling produces an unbiased estimate, but can still experience large variance in spite of a good proposal distribution $q$ when $p_\theta$ is a heavy tailed distribution. Conversely, the beam search lower bound can be seen as a biased estimate with zero variance. We can remedy the limitations of both methods by recognizing that since $p^*_\theta(X_{1:K}\in\mathcal{Q}) = \sum_{x_{1:K}\in\mathcal{B}_K}p^*_\theta(X_{1:K}=x_{1:K}) +  \sum_{x_{1:K}\in\mathcal{Q}\setminus\mathcal{B}_K}p^*_\theta(X_{1:K}=x_{1:K})$ where $\mathcal{B}_K$ is the set of sequences resulting from beam search, we can use importance sampling on the latter summation. The only caveat to this is that the proposal distribution must match the same domain of the summation: $\mathcal{Q}\setminus\mathcal{B}_K$. 

The proposal distribution we use to accomplish this is $q(X_{1:K}=x_{1:K}|X_{1:K}\notin\mathcal{B}_K)$ (see Eq.~\ref{eq:proposal_dist}). This is implemented by storing all intermediate distributions (both original $p_\theta$ and proposal $q$) found during beam search and arranging them into a tree structure. All leaf nodes that are associated with $x_{1:K}\in\mathcal{B}_K$ have their transition probability zeroed out and then the effect of restriction and normalization is propagated up the tree to their ancestor nodes (details provided in \cref{sec:hybrid_details} and \ref{sec:app_hybrid_variance}). Sampling from this new proposal distribution involves sampling from this normalized tree until we reach either an existing $K$-depth leaf node, in which case we are done, or a $<K$-depth leaf node in which case we complete the rest of the sequence by sampling from $q$ in \cref{eq:proposal_dist} as usual.

Lastly, since our ultimate goal is to sample from the long tail of $p_\theta$, targeting a specific coverage $\alpha$ during beam search is no longer effective since achieving meaningfully large coverage bounds for non-trivial path spaces is generally intractable. Instead, we propose \emph{tail-splitting} beam search to better match our goals. Let $w_k^{(i)}=p^*_\theta(X_{1:k+1}=x^{(i)}_{1:k+1})$ for $x_{1:k+1}^{(i)}\in\mathcal{B}_k\times\mathcal{V}_{k+1}$ such that $w_k^{(i)} \geq w_k^{(j)}$ if $i < j$. In this regime, $\mathcal{B}_{k+1}=\{x^{(i)}_{1:k+1}\}_{i=1}^B$ where $B=\arg\min_{b} \sigma(w_k^{(1:b)})+\sigma(w_k^{(b+1:|\mathcal{B}_k\times\mathcal{V}_{k+1}|)})$. $\sigma(w_k^{(u:v)})$ is the empirical variance of $w_k^{(i)}$ for $i=u,\dots,v$. This can be seen as performing 2-means clustering on the $w_k$'s and taking the cluster with the higher cumulative probability.

\subsection{Saving Computation on Multiple Queries}
Should multiple queries need to be performed, such as $p_\theta^*(\tau(a)=k)$ for multiple values of $k$, then there is potential to be more efficient in computing estimates for all of them. The feasibility of reusing intermediate computations is dependent on the set of queries considered. 
For simplicity, we will consider two base queries $\mathcal{Q}=\mathcal{V}_1\times\dots\times\mathcal{V}_K$ and $\mathcal{Q}'=\mathcal{V}_1'\times\dots\times\mathcal{V}_{K'}'$ where $K < K'$. Due to the autoregressive nature of $p_\theta$, if $\mathcal{V}_i=\mathcal{V}_i'$ for $i=1,\dots,K-1$ then all of the distributions and sequences needed for estimating $p_\theta^*(X_{1:K}\in\mathcal{Q})$ are guaranteed to be intermediate results found when estimating $p_\theta^*(X_{1:K'}\in\mathcal{Q}')$. To be explicit, when estimating the latter query with beam search the intermediate $\mathcal{B}_{K}$ is the same as what would be directly computed for the former query. Likewise, for importance sampling if $x_{1:K'} \sim q(X_{1:K'})$ using \cref{eq:proposal_dist} then the subsequence $x_{1:K} \sim q(X_{1:K})$. This does not apply   when the sample path domain is further restricted, such as with the hybrid approach, in which case we cannot  directly use intermediate results to compute other queries for ``free.''

\section{Experiments and Results}
\label{sec:experiments}

\paragraph{Experimental Setting} 
We investigate estimates of hitting time queries across various datasets, comparing beam search, importance sampling, and the hybrid method. We find that hybrid systematically outperforms both pure search and sampling given a comparable computation budget across queries and datasets. We also investigate  the dependence of all three methods on the model entropy. 

It is worth noting that we focus almost exclusively on hitting time queries in our primary experiments as more complex queries \textit{often decompose into operations over individual hitting times}. For example. consider the following decomposition of the ``A before B'' query (Q4):
\begin{align*}
p_\theta^*(\tau(a) < \tau(b) ) = \sum_{k=1}^\infty p_\theta^* \!\left(\tau(a)=k, \tau(b)>k\right)
 = \sum_{k=1}^\infty p_\theta^* \! \left(X_k=a, X_{<k}\in (\V\setminus\{a,b\})^{k-1}\right) \label{eq:hit_comp_sum}
\end{align*}
Decompositions of other queries and their impact on approximation error are found in \cref{sec:query_decomposition}.

\paragraph{Datasets}
We evaluate our query estimation methods on three user behavior and two language sequence datasets. \textbf{Reviews} \citep{amazon-rev-ni-etal-2019-justifying} contains 
sequences of Amazon customer reviews for products belonging to one of $V=29$ categories; \textbf{Mobile Apps} \citep{app_dataset} consists of app usage records over $V=88$ unique applications; \textbf{MOOCs} consists of student interaction with online course resources over $V=98$ actions. We also use the works of William \textbf{Shakespeare} \citep{shakespeare_data} by modeling the occurrence of $V=67$ unique ASCII characters. Lastly, we examine \textbf{WikiText} \citep{wikitext} to explore word-level sequence modeling applications with GPT-2, a large-scale language model with a vocabulary of $V=50257$ word-pieces \citep{gpt2-radford, word-piece-Wu2016GooglesNM}. 
After preprocessing, no datasets contain personal identifiable information.

\paragraph{Base Models}
While our proposed methods are amenable to autoregressive models of arbitrary structure, we focus our analysis specifically on recurrent neural networks. For all datasets except WikiText, we train Long-short Term Memory (LSTM) networks until convergence. To explore modern applications of sequence modeling with WikiText, we utilize GPT-2 with pre-trained weights from HuggingFace \citep{hugging-face-wolf-etal-2020-transformers}. Model training and experimentation utilized roughly 200 NVIDIA GeForce 2080ti GPU hours. Please refer to \cref{sec:dataset_model_prep} for additional details. 

\paragraph{Experimental Methods} 
We investigate computation-accuracy trade-offs between 3 estimation methods (beam search, importance sampling, and the hybrid) across all datasets. Query histories $\mathcal{H}$ are defined by randomly sampling $N=1000$ sequences from the test split for all datasets except WikiText, from which we sample only $N=100$ sequences due to computational limitations. For each query history and method, we compute the hitting time query estimate $p_\theta^*(\tau(a)=K)$ over $K=3, \ldots , 11$, with $a$ determined by the $K^{th}$ symbol of the ground truth sequence. 

To ensure an even comparison of query estimators, we fix the computation budget per query in terms of model calls $f_\theta(h_k)$ to be equal across all 3 methods, repeating experiments for different budget magnitudes roughly corresponding to $O(10), O(10^2), O(10^3)$ model calls (see \cref{sec:additional_exps} for full details). We intentionally select relatively small computation budgets per query to support systematic  large-scale experiments over multiple queries up to relatively large values of $K$.  Results for queries with GPT-2   are further restricted because of computational limits and are reported separately below.

To evaluate the accuracy of the estimates for each query and method, we compute the true probability of $K$ using exact computation for small values of $K \leq 4$. For larger values of $K$, we run importance sampling with a large number of samples $S$, where $S$ is adapted per query to ensure the resulting 
unbiased estimate has an empirical variance less than $\epsilon\ll1$ (see \cref{sub:ground_truth_calc}). 
This computationally-expensive 
estimate is then used as a surrogate for ground truth in error calculations.

Coverage-based beam search is not included in our results: we found   that it experiences exponential growth with respect to $K$ and  does not scale efficiently due to its probability coverage guarantees. Additional details are provided in \cref{sub:coverage_bs_ablation}.

\begin{figure}
    \centering
    \includegraphics[width=1.0\textwidth]{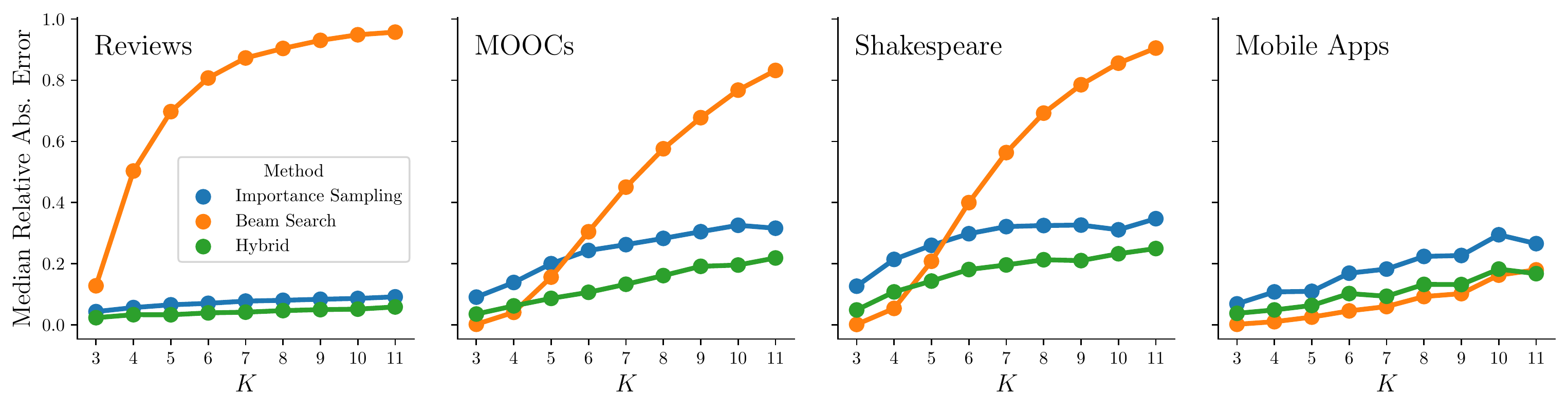}
    \caption{Median relative absolute error (RAE) between estimated probability and (surrogate) ground truth for $p^*_\theta(\tau(\cdot)=K)$ 
    for importance sampling, beam search, and the hybrid method. As query path space grows with $K$, beam search quickly fails to bound ground truth while sampling remains robust, with the hybrid consistently outperforming all other methods, especially for large values of $K$. Ground truth values used to determine error are exact for $K \leq 4$ and approximated otherwise.}
    \label{fig:err_plot}
\end{figure}

 
\paragraph{Results: Accuracy and Query Horizon}
Using the methodology described above, for each query, we compute the relative absolute error (RAE) $|p - \hat{p} |/p$, where $\hat{p}$ is the estimated query probability generated by a particular method and $p$ is the ground truth probability or the surrogate estimate using importance sampling. For each dataset, for each of the 3 levels of computation budget, for each value of $K$, this yields $N=1000$ errors for the $N$ queries for each method.  

\cref{fig:err_plot}  shows the median RAE of the $N$ queries, per method, as a function of $K$, for each dataset, using the medium  computation budget in terms of model calls.
Across the 4 datasets the error  increases systematically as $K$ increases. However,  beam search is significantly less robust than the other methods for 3 of the 4 datasets:  the error rate increases rapidly in a non-linear fashion compared to  importance sampling and hybrid. Beam search is also the most variable across datasets relative to other methods. The hybrid method systematically outperforms importance sampling along across all 4 datasets and  for all values of $K$.  In \cref{sec:additional_exps}  we provide additional results; for the lowest and highest levels of computational budget, for mean (instead of median) RAEs, and scatter plots for specific values of $K$ with more detailed error information. The qualitative conclusions are consistent across all plots.

\begin{figure}
    \centering
    \includegraphics[width=1.0\textwidth]{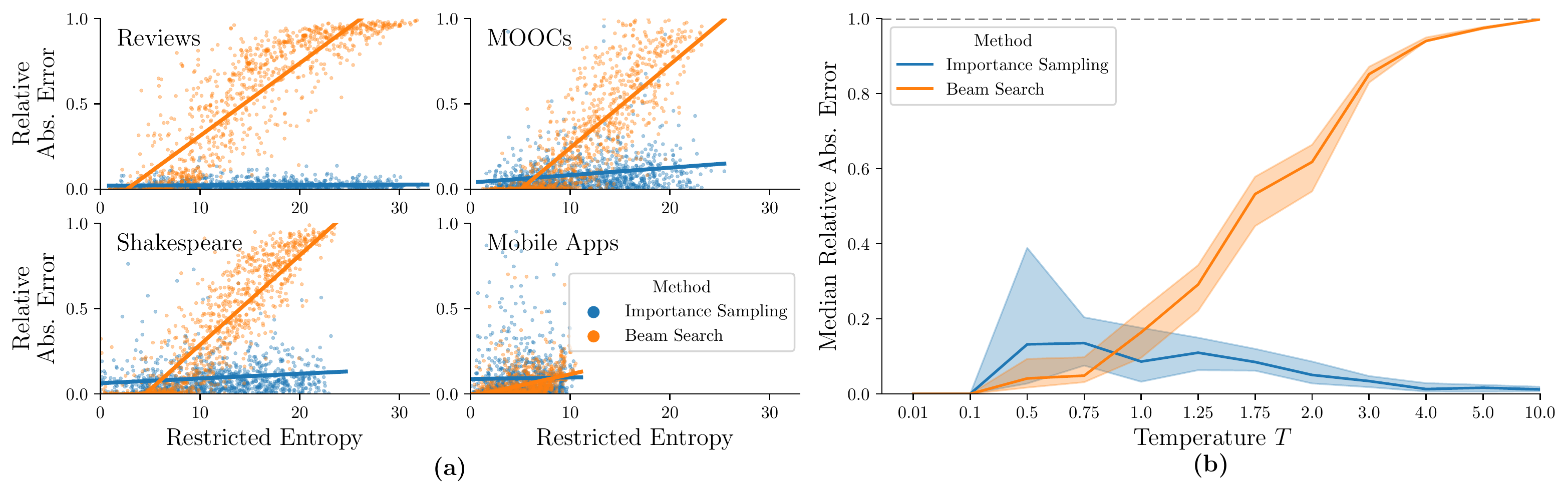}
    \caption{(left) RAE vs restricted entropy per query (with best linear fits), (right) Median RAE versus model temperature $T$ for Mobile App data. All errors computed using the same queries as in \cref{fig:err_plot}. Beam search errors correlate highly with model entropy even with the low-entropy Mobile Apps dataset, where increasing temperature $T$ directly induces this failure mode.
    } 
    \label{fig:temp_plots}
\end{figure}

\paragraph{Results: The Effect of Model Entropy}
We conjecture that the entropy of the proposal distribution $q$ conditioned on a given history $H_q^*(X_{1:K})=-\E_{x_{1:K}\sim q}[\log q(X_{1:K}=x_{1:K})]$, which we refer to as \emph{restricted entropy}, is a major factor in the performance of the estimation methods. 
\cref{fig:temp_plots}(a) shows the RAE per query (with a linear fit) as a function of estimated restricted entropy for importance sampling and beam search. The results clearly show that entropy is driving the query error in general and that the performance of beam search is much more sensitive to entropy than sampling. The difference in entropy characteristics across datasets explains the differences in errors we see in \cref{fig:err_plot}. In particular, the Mobile Apps dataset is in a much lower entropy regime than the other 3 datasets.

To further investigate the effect of entropy, we alter each model by applying a temperature $T>0$ to every conditional factor: $p_{\theta,T}(X_k|X_{<k}) \propto p_\theta(X_k|X_{<k})^{1/T}$, effectively changing the entropy ranges for the models. \cref{fig:temp_plots}(b) shows the median RAE, for query $p^*_{\theta,T}(\tau(\cdot)=4)$, as a function of model temperature for the Mobile Apps data. As predicted from   \cref{fig:temp_plots}(a), the increase in $T$, and corresponding increase in entropy, causes beam search's error to converge to 1, while the sampling error goes to 0. As $T$ increases, each individual sequence will approach having $1/|\mathcal{Q}|$ mass, thus needing many more beams to have adequate coverage. Results for other queries and the other 3 datasets (in \cref{sec:additional_exps}) further confirm the fundamental bifurcation of error between search and sampling (that we see in \cref{fig:temp_plots}(b))  as a function of entropy.

\paragraph{Results: Relative Efficiency of Proposal Distribution over Naive Query Estimation}

\begin{figure}
    \centering
    \includegraphics[width=\textwidth]{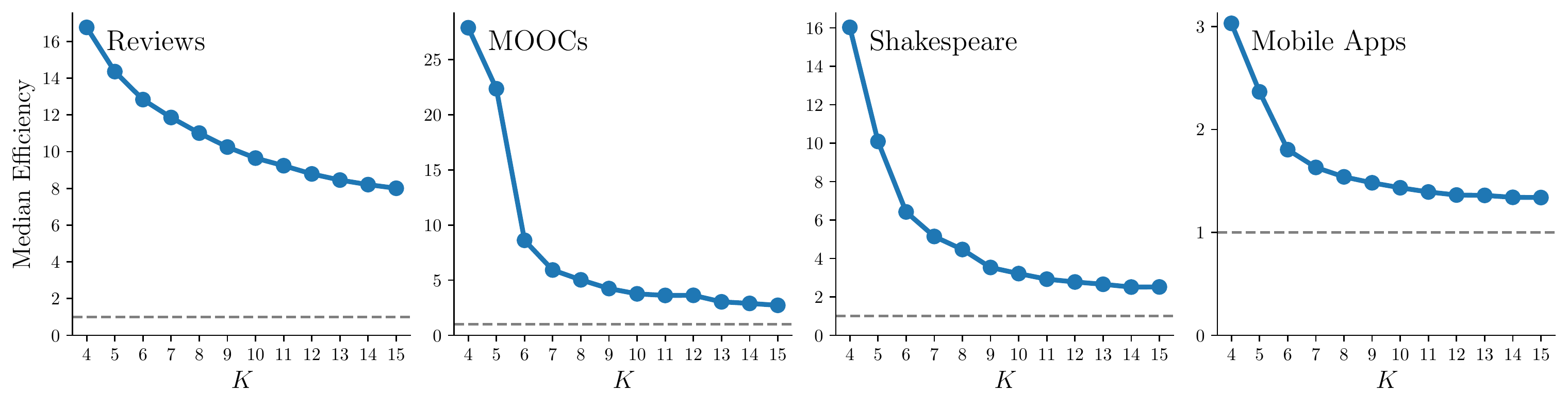}
    \caption{Median relative efficiency (over 1000 query histories and all vocabulary terms) of importance sampling estimation of the $K$-step marginal distributions for each dataset. The gray, dotted line represents 100\% relative efficiency defined by naive query estimation. Relative efficiency is documented for $ 4 \leq K \leq 15$ to highlight the regime where ground truth cannot be tractably computed.}
    \label{fig:relative_efficiency_abl}
\end{figure}

We also examine the relative efficiency improvements of our proposal distribution against naive Monte Carlo sampling: 
\begin{equation*}
p_\theta^{*}(X_{1:K} \in \mathcal{Q}) = \mathbb{E}_{x_{1:K} \sim p^*}[\mathbbm{1}(x_{1:K} \in \ \mathcal{Q})]
\end{equation*}

Our relative efficiency calculations in \cref{fig:relative_efficiency_abl} represent the variance ratio of naive query estimates and estimates from our query proposal distribution. As shown, all datasets witness improvement over naive sampling efficiency and often by a significant margin. We also observe that relative efficiency is largest for shorter query horizons, approaching naive sampling efficiency as $K$ increases.

\paragraph{Results: Queries with a Large Language Model}

We further explore entropy's effect on query estimation with GPT-2 and the WikiText dataset for $N=100$, $K = 3,4$, across 3 computation budgets. With a vocabulary 500x larger than the other models, GPT-2 allows us to examine queries relevant to NLP applications. 
The resulting high entropy causes beam search to fail  to match surrogate ground truth given the computation budgets (consistent with  earlier experiments), with a median RAE of 82\% (for $K=4$ and a budget of $O(10^3)$). By contrast, importance sampling's median RAE under the same setting is 13\%, \textbf{a 6x reduction}. 
Additional results  are in \cref{sec:gpt_exp}.

\section{Discussion} 
\label{sec:discussion}


\paragraph{Future Directions:}
This work provides a starting point for multiple different research directions beyond the scope of this paper. One such direction is exploring more powerful search and sampling methods for query estimation. This includes methods for sampling without replacement, such as the Gumbel-max method \citep{huijben2022review}, sampling importance-resampling (SIR) \citep{sampling_importance_resamp}, as well as new heuristics for automatically trading off search and sampling on a per-query basis. Another direction for exploration is amortized methods, such as learning models before queries are issued, that are specifically designed to help answer queries. Learning models that include queries as part of regularization terms in objective functions can also build on this work, e.g., learning  models that don't only rely on one-step-ahead losses \citep{meister2021language}. The work in this paper could also be broadened to continuous-time  models for marked temporal point processes (requiring marginalization over time as well as event types).


\paragraph{Limitations:}
Our results rely on only four datasets with a single autoregressive neural sequence model trained for each, naturally limiting the breadth of conclusions that we can draw. However, given that the results are consistent and validated by algorithm-independent entropy and efficiency analyses, we believe these findings have general validity and provide a useful starting point for others interested in the problem of querying neural sequence models. Another potential limitation is that our four datasets have relatively small vocabularies ($V = O(10^2)$, small by NLP standards at least); this choice was largely driven by computational limitations in terms of being able to conduct conclusive experiments (e.g. averaging over many event histories). Our (limited) results with GPT-2 provide clues on what may be achievable with much larger vocabularies: systematic analysis of querying for such models is a natural target for future work. Our work also does not address the issue of model inaccuracy: our results are entirely focused on computing  query estimates in terms of a model's distribution instead of the data-generating process. 
Exploring the effect of miscalibration errors in autoregressive models on  $k$-step-ahead query estimates is a promising avenue for future work.


\paragraph{Potential Negative Societal Impacts:}
Since the focus of this work is making predictions with data that is typically generated by individuals (language, online activity), there is naturally a potential for abuse of such methods. For example, if the underlying model in a system is miscalibrated, decisions could be made that negatively impact individuals. Examples include recommending a student be dropped from an online course if the model incorrectly predicts they will not participate in course modules. Even if the underlying model is well-calibrated,  predictive queries could potentially be used in a proactive manner to bias decisions against individuals whose event sequences are atypical, e.g., in a chatbot context,  inaccurately predicting future language patterns for certain individuals, leading to interruption and generation of an inappropriate response.

\paragraph{Acknowledgements:}
We thank the NeurIPS reviewers for their suggestions on
improving the original version of this paper. This work
was supported by National Science Foundation Graduate
Research Fellowship grant DGE-1839285 
(AB and SS), by an NSF CAREER Award (SM), by the National Science Foundation under award numbers 1900644 (PS), 2003237 (SM), and 2007719 (SM),
by the National Institute of Health under awards R01-AG065330-02S1 and R01-LM013344 (PS), by the Department of Energy under grant DE-SC0022331 (SM), by the HPI
Research Center in Machine Learning and Data Science at
UC Irvine (SS), and by Qualcomm Faculty awards (PS and SM).



\bibliography{citations}



\newpage

\appendix
\section{Query Estimation Process and Examples (Section 1 in paper)} \label{sec:app_modeling_details}


Below, we outline in clear terms the process of conducting query estimation experiments that we leverage in the main paper. Furthermore, we include demonstrative examples of potential queries of interest.

\paragraph{Query Estimation Process}
\begin{enumerate}
    \item Sample a sequence $x$ of length $n$ from the test set.
    \item Using a model $p_\theta$ that was trained on the training split of the dataset, condition on the first 5 elements, $x_{1:5}$.
    \item Then, using the proposed method of interest (e.g., importance sampling, coverage-based beam search, etc.), approximate the query of interest for the future continuation of the sequence–typically K steps into the future, $x_{6:6+K}$. As mentioned in lines 256 and 257 of the main paper, the main experiments pertain to hitting time queries $\tau(a)=K$ where $K=3,\dots,11$ and $a$ is determined as being equal to the actual value of $x_{6+K}$ from the sampled sequence. We ensure $6+K\leq n$.
    \item Repeat steps 1-3 over 1000 sequences (unless the dataset is WikiText, in which case do 100 sequences).
    \item Against either $p$ determined by absolute ground truth, or pseudo-ground truth $p$ computed via sampling, compute relative absolute error (RAE) of the form $\frac{|\hat{p} - p|}{p}$. For a given dataset, query estimation method, and budget, compute the median RAE over all 1000 (or 100 in the case of WikiTest) sampled sequences, where a sampled sequence is a history-target tuple $[\mathcal{H}, a]$. 
\end{enumerate}

\textbf{MOOCs} Query Example: 
\begin{itemize}[itemsep=0pt]
    \item \textbf{History} $\mathcal{H} = [ \texttt{log on}, \texttt{ open assignment}, \texttt{watch lecture} ]$
    \item \textbf{Query Class}: “A” before “B” query (Q4)
    \item \textbf{Formalism}: $p_\theta^*(\tau(A) < \tau(B)) = p_\theta^*(\tau(\texttt{purchase}) < \tau(\texttt{log off}))$
    \item \textbf{Description}: Given a user’s online class engagement history, what is the probability that they turn in their assignment before they navigate away from the page?
\end{itemize}

\textbf{Shakespeare} Query Example: 
\begin{itemize}[itemsep=0pt]
    \item \textbf{History} $\mathcal{H} = [ \text{“t”}, \text{“h”}, \text{“o”}, \text{“u”}, \text{“<space>”} ]$  
    \item \textbf{Query Class}: Combination query (Q5)
    \item \textbf{Formalism}: $p_\theta^*(N_{\texttt{vowel}}(10)=4)$
    \item \textbf{Description}: In the next 10 characters, what is the probability that 4 of them are vowels? 
\end{itemize}

\section{Query Decomposition (Section 3 in  paper)}
\label{sec:query_decomposition}
All of the questions posed in Table 1 in the main paper can be decomposed into readily available components that our model $p_\theta$ can estimate. Should we not utilize the proposed query structure ($\mathcal{Q}=\cup_i \mathcal{V}_1^{(i)}\times\dots\times\mathcal{V}_K^{(i)}$) then these are the derivations that would need to be performed individually in order to compute these values exactly on a case-by-case basis.

\paragraph{\bfquery{1}} $\prob^*(X_1)$ is already naturally in a form that our model can directly estimate due to the autoregressive factorization imposed by the architecture: $p_\theta^*(X_1)$. Furthermore, if we are interested in any potential continuing sequence $X_{1:K}=x_{1:K}$ this can easily be computed via $p_\theta^*(X_{1:K}=x_{1:K})=\prod_{k=1}^K p_\theta^*(X_k=x_k|X_{<k}=x_{<k})$.

\paragraph{\bfquery{2}} To evaluate $p_\theta^*(X_K)$, terms $X_{<K}$ need to be marginalized out. This is naturally represented like so:
\begin{align*}
p_\theta^*(X_K) & = \E_{x_{<K} \sim p_\theta^*(X_{<K})}\left[p_\theta^*(X_K|X_{<K}=x_{<K})\right] \\
& = \sum_{x_{<K}\in \V^{K-1}} p_\theta^*(X_K, X_{<K}=x_{<K})
\end{align*}
It is helpful to show  both the exact summation form as well as the expected value representation as both will be useful in \cref{sec:methods}.

\paragraph{\bfquery{3}} The ``hitting time'' or the next occurrence of a specific event type $a\in\V$ is defined as $\tau(a)$. Evaluating the distribution of the hitting time with our model can be done like so:
\begin{align*}
p_\theta^*(\tau(a)=K) & = p_\theta^*(X_K=a, X_{<K}\neq a), \\
& = \sum_{x_{<K}\in \{\V\setminus\{a\}\}^{K-1}}p_\theta^*(X_K=a, X_{<K}=x_{<K}).
\end{align*}
The value $a\in\V$ can be easily replaced with a set of values $A\subset \V$ in these representations.
\begin{align*}
p_\theta^*(\tau(A)=k) & = p_\theta^*(X_K\in A, X_{<K}\in (\V\setminus A)^{K-1}), \\
& = \sum_{x_{<K}\in \{\V\setminus A\}^{K-1}} \sum_{a\in A} p_\theta^*(X_K=a, X_{<K}=x_{<K})
\end{align*}
Interestingly, we can see that \query{3} is a generalization of \query{2} by noting that they are identical when $A=\{\}$. 

\paragraph{\bfquery{4}} To evaluate $p_\theta^*(\tau(a) < \tau(b) )$, we must consider the possible instances where this condition is fulfilled. In doing so, we end up evaluating multiple queries similar in form to \query{3}, like so:
\begin{align}
p_\theta^*(\tau(a) < \tau(b) ) & = \sum_{k=1}^\infty p_\theta^*(\tau(a)=k, \tau(b)>k) \nonumber \\
& = \sum_{k=1}^\infty p_\theta^*(X_k=a, X_{<k}\in (\V\setminus\{a,b\})^{k-1}) \label{eq:hit_comp_sum}
\end{align}
In practice, computing this exactly is intractable due to it being an infinite sum. There are two potential approaches one could take to subvert this. The first of which is to ask a slightly different query:
\begin{align*}
    p_\theta^*(\tau(a) < \tau(b)| \tau(a) \leq K) & = \frac{p_\theta^*(\tau(a) < \tau(b), \tau(a) \leq K)}{p_\theta^*(\tau(a) \leq K, \hist_T)} \\
    & = \frac{\sum_{k=1}^{K} p_\theta^*(X_k=a, X_{<k}\in (\V\setminus \{a,b\})^{k-1})}{\sum_{k=1}^{K} p_\theta^*(\tau(a)=k)}.
\end{align*} 
The other option is to produce a lower bound on this expression by evaluating the sum in \cref{eq:hit_comp_sum} for the first $K$ terms. We can achieve error bounds on this estimate by noting that $p_\theta^*(\tau(a) < \tau(b) ) + p_\theta^*(\tau(a) > \tau(b) ) = 1$. As such, if we evaluate \cref{eq:hit_comp_sum} up to $K$ terms for both $p_\theta^*(\tau(a) < \tau(b) )$ and $p_\theta^*(\tau(b) < \tau(a) )$, the difference between the sums will be the maximum error either lower bound can have.

Similar to Q3, we can also ask this query with sets $A\perp B\subset \V$ instead of values $a,b$. 
\begin{align*}
p_\theta^*(\tau(A) < \tau(B) ) & = \sum_{k=1}^\infty p_\theta^*(\tau(A)=k, \tau(B)>k) \\
& = \sum_{k=1}^\infty \sum_{a\in A}p_\theta^*(X_k=a, X_{<k}\in (\V\setminus(A\cup B))^{k-1})
\end{align*}

\paragraph{\bfquery{5}} Evaluating $p^*_\theta(N_a(K)=n)$ also involves decomposing this value of interest into statements involving hitting times. 

Let $\tau^{(l)}(a)$ be the $l^\text{th}$ hitting time of a specific event of interest $a$. In other words, the time of the $l^\text{th}$ occurrence of $a$. Assuming $n \leq K$:
\begin{align*}
p^*_\theta&(N_a(K)=n)\\
& = \sum_{i_1<i_2<\dots<i_n\leq K} p^*_\theta(\tau^{(1)}(a)=i_1, \tau^{(2)}(a)=i_2,\dots, \tau^{(n)}(a)=i_n, \tau^{(n+1)}(a)>K) \\
& = \sum_{i_1<i_2<\dots<i_n\leq K} p^*_\theta(X_{i_1}=\tau^{(1)}(a)=i_1, \tau^{(2)}(a)=i_2,\dots, \tau^{(n)}(a)=i_n, \tau^{(n+1)}(a)>K)
\end{align*}

\paragraph{Approximation Errors} With queries that fall under the Q5 category (e.g., how likely will the event $a$ happen $n$ times in the next $K$ steps? $p_\theta^*( N_{a}(K)=n)$), while in the limiting case C(K,n) will become the dominant scaling factor for the size of the query space, in many circumstances this actually isn’t as impactful as one might expect. For instance, it can be shown that the query space of a Q5 query is larger than that of a hitting time query (Q3) evaluated one step further (at $K+1$) if $\sqrt[n]{C(K,n)} > V$ where $V$ is the vocabulary size. Looking at our main set of experiments from \cref{sec:experiments}, we can see that when evaluating hitting time queries up to 11 steps into the future ($K=11$), these queries had a slightly larger query space of than that of $p_\theta^*(N_{a}(K)=n)$ for $K=10, \forall_{0\leq n \leq K}$ and thus can be seen as comparable in complexity. This behavior easily extends to larger values of $K$ (e.g., $K \approx 100$), especially for larger vocabulary sizes. This is due to the $n^\text{th}$ root dramatically slowing down the factorial growth in the inequality shared above.

\section{Complexity of Predictive Querying with First-Order Markov Models (Section 3 in  paper)}
\label{sec:markov_analysis}

As a point of reference for neural sequence models we summarize below  the cost of various queries for Markov models (using basic results from the theory of finite Markov chains, e.g., see \citep{kemeny1983finite}).
Consider a first-order ergodic homogeneous Markov chain with a $V \times V $ transition matrix with elements $p(x_{k+1} | x_k )$. We analyze below the complexity of various queries for such a chain, conditioned on the current observation $X_0 = v$, where $v \in \V$ (this  is analogous to conditioning on $\hist$ for neural sequence models). We use $p( \cdot | v)$ as shorthand for $p( \cdot | X_0 = v)$.

\paragraph{Q2: computing $p(X_k |  v)$}
This can be computed recursively by computing the conditional distrubution $p(X_1|v)$, then $p(X_2|v) = \sum_{v'} p(X_2|v') p(v'|v)$, and so on, resulting in $k$ matrix multiplications, with complexity $O( k V^2)$.

\paragraph{Q3: computing $p(\tau(a) = k|  v)$}
For finite $k$, we can  compute the result for all values $k' \le k$ using $k-1$ matrix multiplications where,  at each step $k' = 2, \ldots, k-1$, all states (events) except $a$ are marginalized over. Thus, the complexity is  $O(k V^2)$.

In general it is straightforward to show that 
\begin{align*}   
p(\tau(a) = k|  v) = p(a|v) , \ \ \ k=1
\end{align*}
and
\begin{align*} 
p(\tau(a) = k|  v) = p(a| \bar{a}) p(\bar{a} | v) p(\bar{a} | \bar{a} )^{(k-2)}, \ \ \ k>1
\end{align*}
where $p(a | \bar{a} )$ is the probability of transitioning to state $a$ given that the chain is not in state $a$. Computation of $p(a | \bar{a} )$ requires knowledge of the steady-state probabilities of the chain $\pi_a$ and $\pi_{\bar{a}} = \sum_{v' \ne a} \pi_{v'}$. Computing the steady-state probabilities requires inversion of a $V \times \V$ matrix, with a complexity of $O(V^3)$. This general solution will be faster to compute than the version using matrix multiplication (up to horizon $k$) whenever $k > V$.

\paragraph{Q4: computing $p(\tau(a) < \tau(b)|  v)$}
Let $c$ be the set of all states (events) in $\V$ except for $a$ and $b$. It is straightforward to show that
\begin{align*}
p(\tau(a) < \tau(b) | v) = p(a|v) + p(c|v) \frac{ p(a|c) }{p(a|c) + p(b|c)}
\end{align*}
and
\begin{align*}
p(\tau(b) < \tau(a) | v) = p(b|v) + p(c|v) \frac{ p(b|c) }{p(a|c) + p(b|c)}
\end{align*}
where $p(a|c)$ and $p(b|c)$ are the probabilities of transitioning to $a$ and $b$, respectively, given that the system is currently in a state that is neither $a$ or $b$. Computing these probabilities again requires knowledge of the steady-state probabilities $\pi_a, \pi_b, \pi_c$, resulting in $O(V^3)$ time complexity.


\paragraph{General Queries and Higher Order Markov Models}
For simplicity, we will assume that a query takes the form $\mathcal{Q}=\mathcal{V}_1\times\dots\times\mathcal{V}_K$ and we are interested in $p(X_{1:K}\in\mathcal{Q}|\hist)$ for a $m^\text{th}$-order Markov model $p$. This model can be defined with an $(m+1)$-dimensional tensor $\Pi\in\mathbb{R}^{V\times\dots\times V}$ with elements $\pi_{i_1,\dots,i_m,i_{m+1}}$ such that $\sum_{j=1}^V \pi_{i_1,\dots,i_m,j} = 1$ for all $i_1,\dots,i_m\in \V$. Alternatively, $\pi_{i_1,\dots,i_m,i_{m+1}}=p(X_{j+m+1}=i_{m+1}|X_{j+1:j+m}=i_{1:m})$ for $j\geq 0$.

To marginalize out $X_{m+1}$ and compute the conditional distribution $p(X_{m+2}|X_{1:m})$, that requires the following computation:
\begin{align*}
p(&X_{m+2}=x_{m+2}|X_{1:m}=x_{1:m}) = \sum_{v\in\V} p(X_{m+2}=a,X_{m+1}=v|X_{1:m}=x_{1:m}) \\
& = \sum_{v\in\V} p(X_{m+2}=x_{m+2}|X_{m+1}=v,X_{2:m}=x_{2:m})p(X_{m+1}=v|X_{1:m}=x_{1:m}) \\
& = \sum_{v\in\V} \pi_{x_2,\dots,x_m,v,a}\pi_{x_1,\dots,x_{m-1},x_m,v} 
\end{align*}
If we perform this over all values of $x_{1:m}$, we can construct a new transition tensor representing $p(X_{m+2}|X_{1:m})$. We will denote this new tensor as being equal to $\Pi \otimes \Pi$ where $\left(\Pi \otimes \Pi\right)_{i_1,\dots,i_m,v}=p(X_{j+m+2}=v|X_{j+1:j+m}=i_{1:m})$. This operation has a computation complexity of $\mathcal{O}(V^{m+1})$.

This special product can be done repeatedly to further marginalize out. For instance, performing this operation on $\Pi$ $(n-1)$-times results in $(\Pi\otimes\dots\otimes\Pi)_{i_1,\dots,i_m,v}=p(X_{j+m+n}=v|X_{j+1:j+m}=i_{1:m})$, thus marginalizing out $n-1$ terms: $X_{j+m+1:j+m+n-1}$.

Transitioning into a restricted space $\mathcal{V}$ can be done easily by defining a restricted transition tensor $\Pi_{\mathcal{V}}$ such that $\left(\Pi_{\mathcal{V}}\right)_{i_1,\dots,i_m,v} \propto \pi_{i_1,\dots,i_m,v}$ if $v \in \V$, otherwise $\left(\Pi_{\mathcal{V}}\right)_{i_1,\dots,i_m,v}=0$ for all $i_1,\dots,i_m,v \in \V$. 

With this, we have everything we need to compute $p(X_{1:K}\in\mathcal{Q}|\hist)$. If the last $m$-values of the history $\hist$ are equal to $i_1,\dots,i_m$, then:
\begin{align*}
p(X_{1:K}\in\mathcal{Q}|\hist) &= \prod_{k=1}^K p(X_k\in\mathcal{V}_k|X_{<k}\in\mathcal{V}_1\times\dots\times\mathcal{V}_{k-1})\\
&= \prod_{k=1}^K\sum_{v\in\mathcal{V}_k} p(X_k=v|X_{<k}\in\mathcal{V}_1\times\dots\times\mathcal{V}_{k-1})\\
& = \prod_{k=1}^K \sum_{v\in\mathcal{V}_k} \left(\Pi_{\mathcal{V}_1}\otimes\dots\otimes\Pi_{\mathcal{V}_k}\right)_{i_1,\dots,i_m,v}.
\end{align*}
The dominant factor in the computational complexity of this is computing all of the special products of $\Pi$. As such, this has a total computational complexity of $\mathcal{O}((k-1)V^{m+1})$. 





\section{Details for the Hybrid Method (Section 4 in paper)}
\label{sec:hybrid_details}

The hybrid method estimates an arbitrary probabilistic query $p^*_\theta(X_{1:K}\in\mathcal{Q})$ by first using some variant of beam search (in our results we use our proposed \emph{tail-splitting} beam search, see Section 3) to find a lower bound $p^*_\theta(X_{1:K}\in\mathcal{B})$, and then using a sampling method (we use importance sampling, see Section 3) to estimate the remainder $p^*_\theta(X_{1:K}\in\mathcal{Q}\setminus\mathcal{B})$. The only complicated portion of this process lies in how we derive an acceptable proposal distribution for the sampling phase that respects the domain $\mathcal{Q}\setminus\mathcal{B}$. The following paragraphs detail how we construct such a distribution.

\paragraph{Viewing $\V^K$ and $p^*_\theta$ as  Trees}
In the space of all possible sequences of length $K$, $X_{1:K}\in\V^K$, one can represent these sequences as paths in a tree. Each node in this tree represents a single element in a sequence $X_k\in\V$ with depth $k$, with parent and children nodes representing previous and potential future values in the sequence respectively. The root node either represents the very beginning of a sequence, or a concrete history $\hist$ to condition on.

This tree can be augmented into a probabilistic one by defining edges between nodes as the conditional probability of a child node being next in a sequence, conditioned on all ancestors of that child. These probabilities are naturally determined by $p^*_\theta(X_k=x_k|X_{<k}=x_{<k})$ where $x_k$ is the child node value and $x_{<k}$ are the ancestors' values. 

\paragraph{Building the Tree}
Any subset of $\V^K$ can also be represented as a tree, and in fact will be a sub-tree of the one that represents $\V^K$. As such, there exists a tree that represents $\mathcal{Q}$. Our usual proposal distribution $q$ is a natural source of conditional probabilities for the edges. While none of these trees with their edge weights are fully known ahead of time, we do explore and uncover them through the process of beam search. As such, during the beam search phase of the hybrid method we keep track of any conditional distributions $q(X_k=x_k|X_{<k}=x_{<k})$ that are computed and use them to construct a partial view of the tree for $\mathcal{Q}$. Note that the end result of this process is a tree that will likely have many paths that do not fully reach depth $K$; however, there will be at least $|\mathcal{B}|$ many that do.

For our purposes, it is also useful to keep track of $p^*_\theta(X_k=x_k|X{<k}=x_{<k})$ over this restricted set, as well as model byproducts such as hidden states to reduce computation redundancy later. 

\paragraph{Pruning the Tree}
After beam search has completed, we are left with a resulting set of beams $\mathcal{B}$ and a probabilistic tree representing $q$. We would now like to alter this tree such that its weights represent $q_\mathcal{B}(X_{1:K}=x_{1:K}):=q(X_{1:K}=x_{1:K}|X_{1:K}\notin\mathcal{B})$. This alteration can be accomplished by adjusting the edge weights in the tree recursively as detailed below. New weight assignments will be denoted by $q'$ to differentiate from old weights $q$. The steps to the procedure are defined as follows:
\begin{enumerate}[leftmargin=20pt]
\itemsep0em 
\item At the final depth $K$, assign edge weights $q'(X_K=x_K|X_{<K}=x_{<K})=0$ for all $x_{1:K}\in\mathcal{B}$. All other edge weights in the final depth will have new weights $q'(X_K=x_K|X_{<K}=x_{<K})=q(X_K=x_K|X_{<K}=x_{<K})$. 
\item At the next layer above with depth $k=K-1$, assign edge weights as $q'(X_k=x_k|X_{<k}=x_{<k})=q(X_k=x_k|X_{<k}=x_{<k})\sum_{v\in\V}q'(X_{k+1}=v|X_{\leq k}=x_{\leq k})$ for all sub-sequences $x_{1:k}\in\mathcal{B}$.  
\item Repeat step 2 iteratively for $k=K-2, K-3, \dots, 2, 1$. 
\item Finally, normalize every conditional distribution for every node whose children edges were altered such that they each sum to 1.
\end{enumerate}
After these steps are completed, $q_\mathcal{B}(X_{1:K}=x_{1:K})=\prod_{k=1}^K q'(X_k=x_k|X_{<k}=x_{<k})$. Note that weights related to sequences that were not discovered during beam search, and are thus not in the tree, are not altered and still match the original proposal distribution $q$. As such, to sample sequences from the tree, we start at the root node and sample from each successive conditional distribution until either depth $K$ or a leaf node at depth $k<K$ is reached. In the former scenario, the sampling is complete. In the latter, the remaining values of the sequence are sampled from $q(\cdot|X_{\leq k}=x_{\leq k})$ like usual.

\section{Variance of Estimates from the Hybrid Method (Section 4 in paper)}
\label{sec:app_hybrid_variance}

We are assuming to be under the hybrid method regime where a collection of sequences $\mathcal{B}\subset\mathcal{Q}$ relevant to answering $p^*_\theta(X_{1:K}\in\mathcal{Q})$ have been deterministically found and are interested in using sampling methods to estimate the remainder $p^*_\theta(X_{1:K}\in\mathcal{Q}\setminus\mathcal{B})$. For brevity, we will assume that $\mathcal{Q}=\mathcal{V}_1\times\dots\times\mathcal{V}_K$. As mentioned in the previous section, we leverage our originally presented proposal distribution $q$ by further restricting the domain to $\mathcal{Q}\setminus\mathcal{B}$ in order to be used in this scenario. This will be represented by
\begin{align*}
q_\mathcal{B}(X_{1:K}=x_{1:K}):\!&=q(X_{1:K}=x_{1:K}|X_{1:K}\notin\mathcal{B}) \\
&=\frac{q(X_{1:K}=x_{1:K})\ind(x_{1:K}\in\mathcal{Q}\setminus\mathcal{B})}{1-q(X_{1:K}\in\mathcal{B})}.
\end{align*}
Note that an associated autoregressive form $q_\mathcal{B}(X_k=x_k|X_{<k}=x_{<k})$ exists and is well defined; however, the exact definition is a bit unwieldy. Please refer to the previous section for more details.

Using this proposal distribution, we can easily estimate the remaining probability:
\begin{align*}
    p^*_\theta(X_{1:K}\in\mathcal{Q}\setminus\mathcal{B}) & = \E_{x_{1:K}\sim q_\mathcal{B}}\left[\frac{p^*_\theta(X_{1:K}=x_{1:K})}{q_\mathcal{B}(X_{1:K}=x_{1:K})}\right] \\
& \approx \frac{1}{M} \sum_{m=1}^M \frac{p^*_\theta(X_{1:K}=x_{1:K}^{(m)})}{q_\mathcal{B}(X_{1:K}=x_{1:K}^{(m)})} \quad\quad\text{ for } x_{1:K}^{(1)},\dots,x_{1:K}^{(M)}\overset{iid}{\sim} q_\mathcal{B}.
\end{align*}
Let $\omega_\mathcal{B}(x_{1:K}):=\frac{p^*_\theta(X_{1:K}=x_{1:K})}{q_\mathcal{B}(X_{1:K}=x_{1:K})}$ and for brevity we will refer to $p^*_\theta(X_{1:K}\in\mathcal{Q}\setminus\mathcal{B})$ as  $\bar{\omega}_\mathcal{B}$.

If we assume $\mathcal{B}'=\mathcal{B}\cup\{\hat{x}_{1:K}\}$ for some $\hat{x}_{1:K}\in\mathcal{Q}\setminus\mathcal{B}$, then it is interesting to determine when exactly there will be a reduction in sampling variance for $p^*_\theta(X_{1:K}\in\mathcal{Q}\setminus\mathcal{B}')$ versus $p^*_\theta(X_{1:K}\in\mathcal{Q}\setminus\mathcal{B})$ as this will give insight into when the hybrid method is successful. 
In other words, we would like to show when the following inequality holds true:
\begin{align*}
\Delta_{\text{Var}}:=\var_{x_{1:K}\sim q_{\mathcal{B}}} \left[\omega_{\mathcal{B}}(x_{1:K})\right] - \var_{x_{1:K}\sim q_{\mathcal{B}'}} \left[\omega_{\mathcal{B}'}(x_{1:K})\right] \geq 0
\end{align*}
If this is true for a given $\hat{x}_{1:K}$, then this finding can be applied recursively for more general (but still possibly restricted) $\mathcal{B}'\supset\mathcal{B}$.
\begin{align*}
\var_{x_{1:K}\sim q_{\mathcal{B}}} \left[\omega_{\mathcal{B}}(x_{1:K})\right] & = \E_{x_{1:K}\sim q_\mathcal{B}} \left[\left(\omega_\mathcal{B}(x_{1:K}) - \bar{\omega}_\mathcal{B}\right)^2\right] \\
& = \E_{x_{1:K}\sim q_\mathcal{B}} \left[\omega_\mathcal{B}(x_{1:K})^2\right] - \bar{\omega}_\mathcal{B}^2 \\
& = \sum_{x_{1:K}\in\mathcal{Q}\setminus\mathcal{B}} q_{\mathcal{B}}(x_{1:K})\omega_\mathcal{B}(x_{1:K})^2 - \bar{\omega}_\mathcal{B}^2
\end{align*}
It then follows that
\begin{align}
\Delta_\text{Var} & =
\bar{\omega}_{\mathcal{B}'}^2 - \bar{\omega}_\mathcal{B}^2 +
\sum_{x_{1:K}\in\mathcal{Q}\setminus\mathcal{B}}  q_{\mathcal{B}}(x_{1:K})\omega_\mathcal{B}(x_{1:K})^2 - \sum_{x_{1:K}\in\mathcal{Q}\setminus\mathcal{B}'} q_{\mathcal{B}'}(x_{1:K})\omega_{\mathcal{B}'}(x_{1:K})^2 \nonumber \\
& = (\bar{\omega}_{\mathcal{B}'}^2 - \bar{\omega}_\mathcal{B}^2) +
q_\mathcal{B}(\hat{x}_{1:K})\omega_{\mathcal{B}}(\hat{x}_{1:K})^2 + \nonumber \\
& \quad\quad \sum_{x_{1:K}\in\mathcal{Q}\setminus\mathcal{B}'}\left[q_{\mathcal{B}}(x_{1:K})\omega_\mathcal{B}(x_{1:K})^2- q_{\mathcal{B}'}(x_{1:K})\omega_{\mathcal{B}'}(x_{1:K})^2\right] \label{eq:three_terms}
\end{align}
We will now analyze each of the three terms in \cref{eq:three_terms} to determine when $\Delta_\text{Var}\geq0$. For the first term, it follows that:
\begin{align}
\bar{\omega}_{\mathcal{B}'} + \bar{\omega}_\mathcal{B} & = p^*_\theta(X_{1:K}\in\mathcal{Q}\setminus\mathcal{B}') +  p^*_\theta(X_{1:K}\in\mathcal{Q}\setminus\mathcal{B})\nonumber  \\
& = 2p^*_\theta(X_{1:K}\in\mathcal{Q}\setminus\mathcal{B}') + p^*_\theta(\hat{x}_{1:K}) \nonumber \\
\bar{\omega}_{\mathcal{B}'} - \bar{\omega}_\mathcal{B} & = p^*_\theta(X_{1:K}\in\mathcal{Q}\setminus\mathcal{B}') - p^*_\theta(X_{1:K}\in\mathcal{Q}\setminus\mathcal{B}) \nonumber \\
& = -p^*_\theta(\hat{x}_{1:K}) \nonumber \\
\implies \bar{\omega}_{\mathcal{B}'}^2 - \bar{\omega}_\mathcal{B}^2 & = -p^*_\theta(\hat{x}_{1:K})\left(2p^*_\theta(X_{1:K}\in\mathcal{Q}\setminus\mathcal{B}') - p^*_\theta(\hat{x}_{1:K})\right) \label{eq:neg_terms}
\end{align}
The other two terms in \cref{eq:three_terms} must sum to a positive value with as large or larger magnitude to \cref{eq:neg_terms} for $\Delta_\text{Var}\geq 0$. Looking at the second term, we see that
\begin{align*}
q_\mathcal{B}(\hat{x}_{1:K})\omega_{\mathcal{B}}(\hat{x}_{1:K})^2 & = \frac{p^*_\theta(\hat{x}_{1:K})^2}{q_\mathcal{B}(\hat{x}_{1:K})} \\
& = p^*_\theta(\hat{x}_{1:K})(1-q(X_{1:K}\in\mathcal{B}))\prod_{k=1}^K p^*_\theta(X_k\in\mathcal{V}_k|X_{<k}=\hat{x}_{<k}) \geq 0.
\end{align*}
This inequality becomes strict should $p^*_\theta(\hat{x}_{1:K})>0$. We will now look at the final summation in \cref{eq:three_terms}:
\begin{align*}
& \sum_{x_{1:K}\in\mathcal{Q}\setminus\mathcal{B}'}\left[q_{\mathcal{B}}(x_{1:K})\omega_\mathcal{B}(x_{1:K})^2- q_{\mathcal{B}'}(x_{1:K})\omega_{\mathcal{B}'}(x_{1:K})^2\right] \\
& = \sum_{x_{1:K}\in\mathcal{Q}\setminus\mathcal{B}'}\left(\frac{p^*_\theta(x_{1:K})^2}{q_{\mathcal{B}}(x_{1:K})} - \frac{p^*_\theta(x_{1:K})^2}{q_{\mathcal{B}'}(x_{1:K})}\right) \\
& = \sum_{x_{1:K}\in\mathcal{Q}\setminus\mathcal{B}'}p^*_\theta(x_{1:K})\left(q(X_{1:K}\in\mathcal{B}') - q(X_{1:K}\in\mathcal{B})\right)\prod_{k=1}^K p^*_\theta(X_k\in\mathcal{V}_k|X_{<k}=x_{<k}) \\
& = q(\hat{x}_{1:K})\sum_{x_{1:K}\in\mathcal{Q}\setminus\mathcal{B}'}p^*_\theta(x_{1:K})\prod_{k=1}^K p^*_\theta(X_k\in\mathcal{V}_k|X_{<k}=x_{<k})\\
& = \frac{p^*_\theta(\hat{x}_{1:K})}{\prod_{k=1}^K p^*_\theta(X_k\in\mathcal{V}_k|X_{<k}=\hat{x}_{<k})}\sum_{x_{1:K}\in\mathcal{Q}\setminus\mathcal{B}'}p^*_\theta(x_{1:K})\prod_{k=1}^K p^*_\theta(X_k\in\mathcal{V}_k|X_{<k}=x_{<k}).
\end{align*}
Since all terms in \cref{eq:three_terms} have a common factor $p^*_\theta(\hat{x}_{1:K})$, we can see that $\Delta_\text{Var}\geq 0$ iff the following holds:
\begin{align}
& 2p^*_\theta(X_{1:K}\in\mathcal{Q}\setminus\mathcal{B}') -\frac{1}{\rho(\hat{x}_{1:K})}\sum_{x_{1:K}\in\mathcal{Q}\setminus\mathcal{B}'}p^*_\theta(x_{1:K})\rho(x_{1:K}) \nonumber \\
& \quad\quad \leq (1-q(X_{1:K}\in\mathcal{B}))\rho(\hat{x}_{1:K}) + p^*_\theta(\hat{x}_{1:K}) \label{eq:final_ineq}
\end{align}
for $\rho(x_{1:K})=\prod_{k=1}^Kp^*_\theta(X_k\in\mathcal{V}_k|X_{<k}=x_{<k})$. Should this hold true, then by taking $\mathcal{B}'$ instead of $\mathcal{B}$ during the beam search segment of the hybrid approach would the variance of the sampling subroutine reduce. Generalizing this further, it is not guaranteed that the hybrid estimate will have a lower variance than regular importance sampling; however, our experimental results across a variety of settings (see Fig. 3 in the main paper) seem to indicate that the variance is reduced on average.

All of the terms to the left of the inequality in \cref{eq:final_ineq} are quantities that would require either expansive computations or estimation in order to know their values. Conversely, all the values to the right of the inequality are readily available as a byproduct of beam search. Incorporating this into decision making for our hybrid method is left for future work.

\section{Proof for Coverage-Based Beam Search Error Upper Bound (Section 4.2 in paper)} \label{sec:bs_upper_proof}

\begin{theorem}
For a given set of decoded sequences $\mathcal{B}\subset\mathcal{Q}$ with coverage $q(X_{1:K}\in\mathcal{B})$, the error between the true probabilistic query value $p^*_\theta(X_{1:K}\in\mathcal{Q})$ and the lower bound $p_\theta^*(X_{1:K}\in\mathcal{B})$ is bounded above by the complement of the coverage: $1-q(X_{1:K}\in\mathcal{B})$.
\end{theorem}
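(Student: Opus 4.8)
The plan is to rewrite the error as a single query over the complementary region and then compare it to the proposal mass on that same region. Since $\mathcal{B}\subset\mathcal{Q}$, the lower-bound gap is exactly $p^*_\theta(X_{1:K}\in\mathcal{Q}) - p^*_\theta(X_{1:K}\in\mathcal{B}) = p^*_\theta(X_{1:K}\in\mathcal{Q}\setminus\mathcal{B})$. On the other side, because the proposal $q$ from \cref{eq:proposal_dist} is a genuine probability distribution supported entirely on $\mathcal{Q}=\mathcal{V}_1\times\cdots\times\mathcal{V}_K$ (each conditional factor renormalizes over $\mathcal{V}_k$), its total mass on $\mathcal{Q}$ is $1$, so $1 - q(X_{1:K}\in\mathcal{B}) = q(X_{1:K}\in\mathcal{Q}\setminus\mathcal{B})$. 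The claim therefore reduces to showing $p^*_\theta(X_{1:K}\in\mathcal{Q}\setminus\mathcal{B}) \le q(X_{1:K}\in\mathcal{Q}\setminus\mathcal{B})$, which I would establish termwise.

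The crux is a pointwise domination statement: for every $x_{1:K}\in\mathcal{Q}$, $p^*_\theta(X_{1:K}=x_{1:K}) \le q(X_{1:K}=x_{1:K})$. To see this I would expand the per-step ratios in \cref{eq:proposal_dist}. Writing $Z_k(x_{<k}) := \sum_{v\in\mathcal{V}_k} p^*_\theta(X_k=v\mid X_{<k}=x_{<k})$ for the per-step renormalizer, and noting that every indicator $\ind(x_k\in\mathcal{V}_k)$ equals $1$ when $x_{1:K}\in\mathcal{Q}$, the numerator product collapses by the chain rule to give $q(X_{1:K}=x_{1:K}) = p^*_\theta(X_{1:K}=x_{1:K}) / \prod_{k=1}^K Z_k(x_{<k})$. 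Each $Z_k(x_{<k})$ is a partial sum of a conditional distribution over $\mathcal{V}_k\subseteq\mathbb{V}$, hence $Z_k(x_{<k})\le 1$, so $\prod_{k=1}^K Z_k(x_{<k})\le 1$ and the ratio is $\ge 1$. This yields the desired domination.

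Summing this pointwise inequality over $x_{1:K}\in\mathcal{Q}\setminus\mathcal{B}$ then gives $p^*_\theta(X_{1:K}\in\mathcal{Q}\setminus\mathcal{B})\le q(X_{1:K}\in\mathcal{Q}\setminus\mathcal{B}) = 1 - q(X_{1:K}\in\mathcal{B})$, which is exactly the claimed bound. The main point to handle carefully is the reduction in the first paragraph — that $q$ really is normalized to $1$ over $\mathcal{Q}$ and that $\mathcal{B}$ lies inside $\mathcal{Q}$ — together with the standing assumption $\mathcal{Q}=\mathcal{V}_1\times\cdots\times\mathcal{V}_K$; for a partitioned query one applies the argument to each $\mathcal{Q}^{(i)}$ separately, as the beam-search description already stipulates. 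I expect no serious obstacle beyond cleanly stating the per-step collapse and the $Z_k\le 1$ observation, which is the heart of why truncating-and-renormalizing the conditionals can only inflate each sequence's mass and hence makes $q$ pointwise dominate $p^*_\theta$ on $\mathcal{Q}$.
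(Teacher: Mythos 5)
Your proof is correct and follows essentially the same route as the paper's: both reduce the gap to $p^*_\theta(X_{1:K}\in\mathcal{Q}\setminus\mathcal{B})$, establish pointwise domination $p^*_\theta(X_{1:K}=x_{1:K})\le q(X_{1:K}=x_{1:K})$ on $\mathcal{Q}$ via the observation that each per-step renormalizer $\sum_{v\in\mathcal{V}_k}p^*_\theta(X_k=v|X_{<k}=x_{<k})\le 1$, and conclude by summing and using $q(X_{1:K}\in\mathcal{Q})=1$. The only cosmetic difference is that the paper compares the conditional factors one at a time and multiplies, whereas you collapse the product into a single ratio with $\prod_k Z_k(x_{<k})$ in the denominator; the underlying argument is identical.
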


\begin{proof}
For brevity, we will assume that $\mathcal{Q}=\mathcal{V}_1\times\dots\times\mathcal{V}_K$ (although this can easily be extended to the general case). Since $q(X_k=x_k|X_{<k}=x_{<k})=\frac{p^*_\theta(X_k=x_k|X_{<k}=x_{<k})\ind(x_k\in\mathcal{V}_k)}{p^*_\theta(X_k\in\mathcal{V}_k|X_{<k}=x_{<k})}$ and $p^*_\theta(\cdot) \leq 1$, it follows that $q(X_k=x_k|X_{<k}=x_{<k}) \geq p^*_\theta(X_k=x_k|X_{<k}=x_{<k})$. This becomes a strict inequality should $p^*_\theta(X_k\in\mathcal{V}_k|X_{<k}=x_{<k}) < 1$ (which is often the case should $\mathcal{V}_k \subset \V$). Since this holds for arbitrary $k$ and $x_k$, it then follows that $q(X_{1:K}=x_{1:K}) \geq p^*_\theta(X_{1:K}=x_{1:K})$  for any $x_{1:K} \in \mathcal{Q}$.\footnote{This can be seen by comparing their autoregressive factorizations term by term.} This inequality becomes strict should any sequence not in the query set have non-zero probability, i.e., $p^*_\theta(X_{1:K}=x_{1:K})>0$ for any $x_{1:K}\in\V^K\setminus\mathcal{Q}$. 

The target value that we are estimating can be broken up into the following terms:
\begin{align*}
p^*_\theta(X_{1:K}\in\mathcal{Q}) & = p^*_\theta(X_{1:K}\in\mathcal{B}) + p^*_\theta(X_{1:K}\in\mathcal{Q}\setminus\mathcal{B}).
\end{align*}
Rearranging these terms yields us the error of our lower bound.
We can easily derive an upper bound on the error for an arbitrary set $\mathcal{B}$:
\begin{align*}
 p^*_\theta(X_{1:K}\in\mathcal{Q}) - p^*_\theta(X_{1:K}\in\mathcal{B}) &= p^*_\theta(X_{1:K}\in\mathcal{Q}\setminus\mathcal{B}) \\
 & = \sum_{x_{1:K}\in\mathcal{Q}\setminus\mathcal{B}} p_\theta^*(X_{1:K}=x_{1:K}) \\
& \leq \sum_{x_{1:K}\in\mathcal{Q}\setminus\mathcal{B}} q(X_{1:K}=x_{1:K}) \\
& = q(X_{1:K}\in\mathcal{Q}\setminus\mathcal{B}) \\
& = q(X_{1:K}\in\mathcal{Q}) - q(X_{1:K}\in\mathcal{B}) \\
& = 1-q(X_{1:K}\in\mathcal{B}) \leq 1-\alpha
\end{align*}
where $\alpha$ is the targeted coverage probability used to find $\mathcal{B}$ with coverage-based beam search.\footnote{Note that $q(X_{1:K}\in\mathcal{Q})=1$ because by design $q(X_{1:K}=x_{1:K})=0$ for every $x_{1:K}\notin\mathcal{Q}$.}
This inequality becomes strict should any sequence not in the query set have non-zero probability, i.e., $p^*_\theta(X_{1:K}=x_{1:K})>0$ for any $x_{1:K}\in\V^K\setminus\mathcal{Q}$.
\end{proof}

\section{Experimental Setup and Preparation (Section 5 in paper)}
\label{sec:dataset_model_prep}
In this section we disclose all dataset preparation and modeling details necessary for reproducing our experimental results. 

\subsection{Datasets}
We evaluate our query estimation methods on three user behavior and two language datasets. We provide details on the preparation and utilization of each below. For all datasets, users are associated with anonymous aliases to remove personally identifiable information (PII). 

\paragraph{Reviews} \citep{amazon-rev-ni-etal-2019-justifying} contains sequences of 233 million timestamped Amazon product reviews spanning from May 1996 to October 2018, with each product belonging to one of 30 product categories.  We restrict our consideration of this dataset to reviews generated by users with at least 15 product reviews and products with a defined category, retaining 63 million reviews on which the model was trained. This dataset is publicly available under the Amazon.com Conditions of Use License. 

\paragraph{Mobile Apps}%
\citep{app_dataset} consists of 3.6 million app usage records  from 200 Android users from September 2017 to May 2018, where each event is an interaction of an individual with an application. User behavior spans 87 unique applications, which we use as the vocabulary for   events   for our experiments. This dataset is released under the Creative Commons License, and all users contain at least 15 mobile app interactions so no data was removed before training. 

\paragraph{MOOCs}%
\citep{kumar2019predicting} is a dataset of sequences of  anonymized user interactions with online course materials from a set of massive open online courses (MOOCs). In total, the dataset includes 97 unique types of interactions. Data from users with fewer than 15 interaction events are not considered, resulting in a dataset of 72\% of users and 93\% of the events (350,000 interactions) of the original dataset. The MOOCs dataset is available under the MIT License.

 \paragraph{Shakespeare}%
 We also examine character-level language models, using the complete works of William Shakespeare \citep{shakespeare_data}, comprising 125,000 lines of text and 67 unique characters and released under the Project Gutenberg License. 

 \paragraph{WikiText}%
 The WikiText-v2 dataset \citep{wikitext} includes word-level language data from "verified Good" and featured articles of Wikipedia. All sentences are provided in English and the dataset is available under the Creative Commons Attribution-ShareAlike License.

\subsection{Neural Sequence Models}
For all datasets except WikiText (word-level language), we trained a 2-layer LSTM with a dropout rate of 0.3 and the ReLU activation function. Each network was trained against cross entropy loss with the Adaptive Moments (Adam) optimizer initialized with a learning rate of 0.001. A constant learning rate decay schedule and 0.01 warm-up iteration percentage was also used. All LSTM models maintain a hidden state size of 512 except the model for Shakespeare, which possesses a reduced hidden state size of 128 to reflect the size of the dataset. Model checkpoints were collected for all models every 2 epochs, and the checkpoint with the highest validation accuracy was selected to be used for query estimation experiments. All models were trained on NVIDIA GeForce 2080ti GPUs.

For WikiText-v2, we leveraged the GPT-2 medium (350 million parameters) Architecture from HuggingFace with pre-trained weights provided by OpenAI. The WikiText-v2 dataset was preprocessed using the tokenization scheme provided by HuggingFace for GPT-2, assigning numeric token indices to work pieces. No finetuning of GPT-2 is conducted.

\subsection{Ground Truth Calculations}
\label{sub:ground_truth_calc}
Exact computation of ground truth is intractable for queries with large path spaces. We circumvent this issue via the law of large numbers by computing \textbf{surrogate ground truth} query estimates with a large computational budget, leveraging the variance of the query's samples as a convergence criterion. Specifically, our algorithm is conducted as follows. We first specify a \textit{minimum} number of samples $S_{\text{low}}=10000$ to be drawn for the surrogate ground-truth estimate. Once $S_{\text{low}}$ samples have been drawn, we compute the variance of our estimate
$\hat{p}^{*}_\theta(X_{1:K}\in\mathcal{Q}) := \frac{1}{S} \sum_{i=1}^S \frac{p^{*}_\theta(X_{1:K} = x^{(i)}_{1:K})}{q(X_{1:K}=x_{1:K}^{(i)})}$ for $x^{(1)}_{1:K},\dots,x^{(S)}_{1:K} \overset{iid}{\sim} q(X_{1:K})$:
\begin{align*}
\widehat{\var}_{q}\left[\hat{p}_\theta^*(X_{1:K}\in\mathcal{Q})\right] & = \frac{1}{S}\sum_{i=1}^S \left(\frac{p^{*}_\theta(X_{1:K} = x^{(i)}_{1:K})}{q(X_{1:K}=x_{1:K}^{(i)})} - \hat{p}^*_\theta(X_{1:K}\in\mathcal{Q})\right)^2
\end{align*}

We then evaluate $\widehat{\var}_{q}\left[\hat{p}_\theta^*(X_{1:K}\in\mathcal{Q})\right]$ every 1000 additional samples until either it drops below tolerance $\delta=1e^{-7}$ or $S$ meets our maximum sample budget $S_{\text{high}}=100000$. This procedure is done in all of our experiments in which a method's performance is being compared to a query's ground truth value and exact ground truth cannot be computed due to resource constraints (typically when $K>4$).

\subsection{Model Budget Determination}
Each of the estimation methods that have been discussed can use varying amounts of computation depending on their configuration. 
To ensure an even comparison between methods, we  configure them throughout our experiments to have roughly the same amount of computation. We measure amount of computation by the number of \textbf{model calls} $f_\theta(h_k)$ used within a given method. This can be controlled directly for all of the methods except the hybrid approach. For this reason, in our experiments we typically use fix the number of samples $S$ for the importance sampling component of the hybrid method, where $S$ is the number of samples used by the hybrid method after conducting tail-splitting beam search. We then use the resulting number of total model calls used by the hybrid method (including both beam search and sampling with $S$ samples) to determine a fixed computation budget (number of model calls to compute $f_\theta(h_k)$) for all other methods. 
Should the hybrid method not be used in an experiment, the budget is set by determining the number of model calls used in drawing $S$ samples for importance sampling. 

\section{Additional Experimental Details and Results (Section 5 in paper)}
\label{sec:additional_exps}
This section  discusses additional experimental details and results that were not included in section 5 of our main paper due to space constraints. In all experiments, all means and medians reported are with respect to  
$N_Q = 1000$ randomly selected sequence locations/histories/queries per datapoint in each plot, unless stated otherwise. For each randomly selected current location, the event $a$ used in the query for $k$ steps ahead corresponds to the actual observed event $a$ for $k$ steps ahead.  




\begin{figure}
    \centering
    \includegraphics[width=1.0\textwidth]{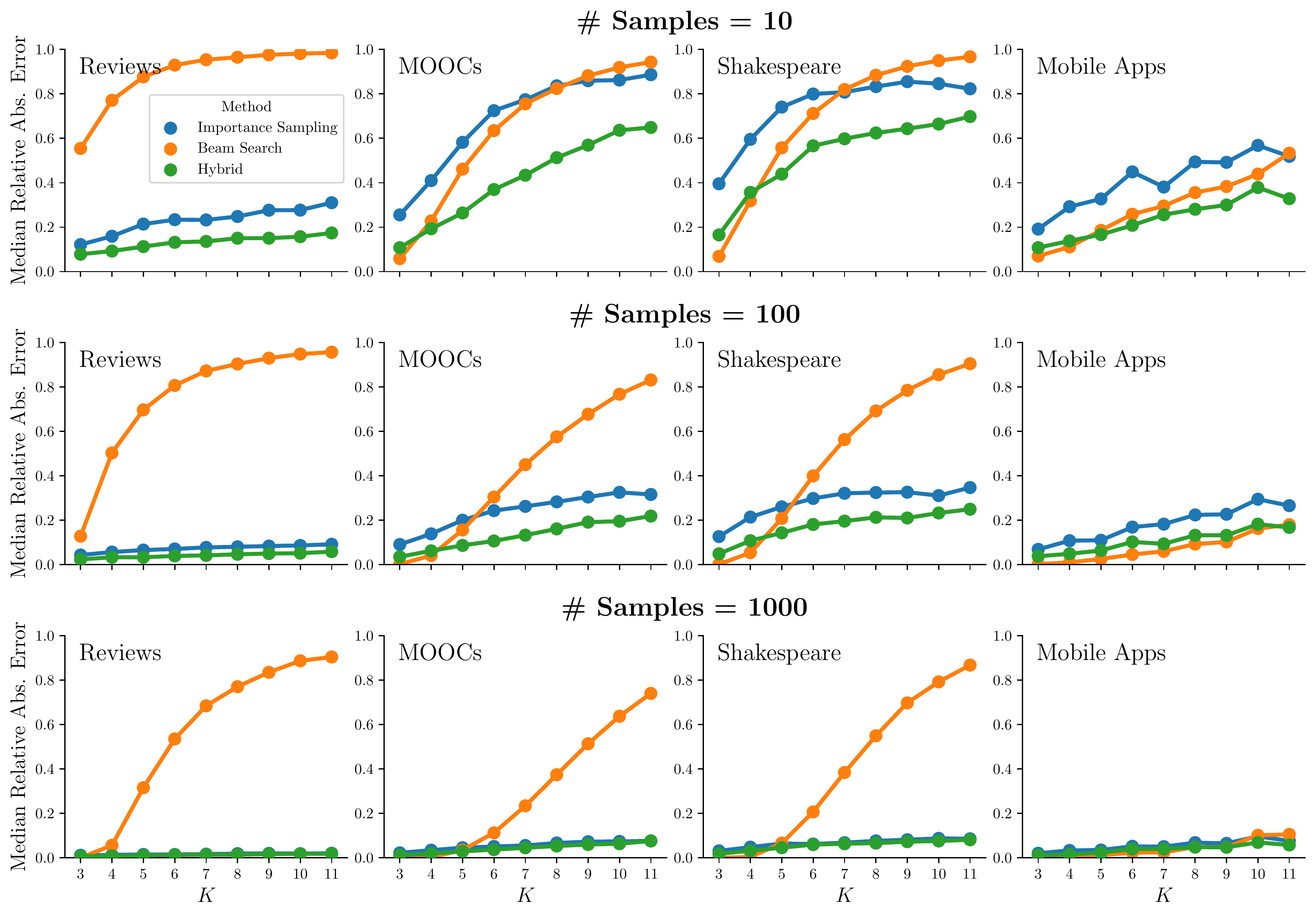}
    \caption{\textbf{Median estimation error versus query horizon $K$}: median relative absolute error (RAE) between estimated probability and (surrogate) ground truth for $p^*_\theta(\tau(\cdot)=K)$, for importance sampling, beam search, and the hybrid method, with varying computation budgets determined by 10, 100, and 1000 samples for the hybrid method. Ground truth values used to determine error in these plots are exact for $K \leq 4$ and approximated otherwise.}. 
    \label{fig:med_err_all_samples}
\end{figure}

\begin{figure}
    \centering
    \includegraphics[width=1.0\textwidth]{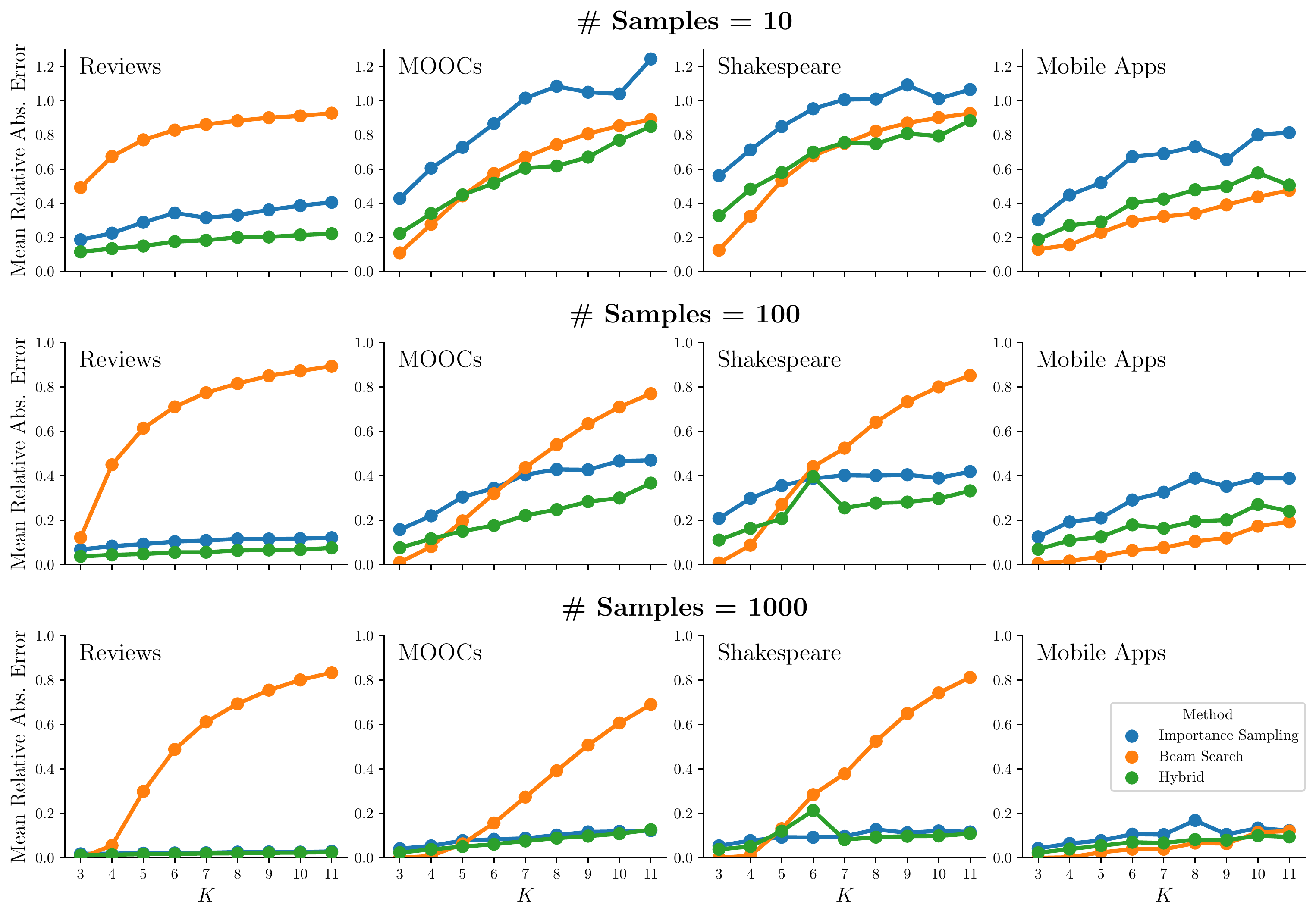}
    \caption{\textbf{Mean estimation error versus query horizon $K$}:   same format as   \cref{fig:med_err_all_samples}. Ground truth values used to determine error in these plots are exact for $K \leq 4$ and approximated otherwise.}
    \label{fig:mean_err_all_samples}
\end{figure}

\subsection{Query Estimation Error as a Function of Horizon $K$}
 Fig. 3 in the main paper shows the median relative absolute error (RAE) (across $N_Q=1000$ queries) as a function of query horizon $K$ for 4 datasets, comparing beam search, importance sampling, and the hybrid method, with a computation budget fixed   at $S=100$ hybrid samples. Here we provide a number of extensions of these results.

\cref{fig:med_err_all_samples} shows the median   RAE, for three levels of computation budget: $S=10, 100, 1000$ and \cref{fig:mean_err_all_samples} shows the same results but now reporting mean RAE on the y-axis. While the details differ across different settings, the qualitative conclusions in these Figures agree with those for Fig. 3 in the main paper, namely that beam search is more sensitive (in its error) to both the horizon query $K$ and to individual datasets, compared to both importance sampling and the hybrid method. More granular perspectives of this information for one of the datasets (MOOCs) can be seen in \Crefrange{fig:err_scatter_10}{fig:err_scatter_1000} in the form of scatter plots of each of the individual query estimates against (surrogate) ground truth for that query. Different budgets are shown in different figures, and the results for $K=3, 5, 7, 9, 11$ are shown in each column.

\begin{figure}
    \centering
    \includegraphics[width=1.0\textwidth]{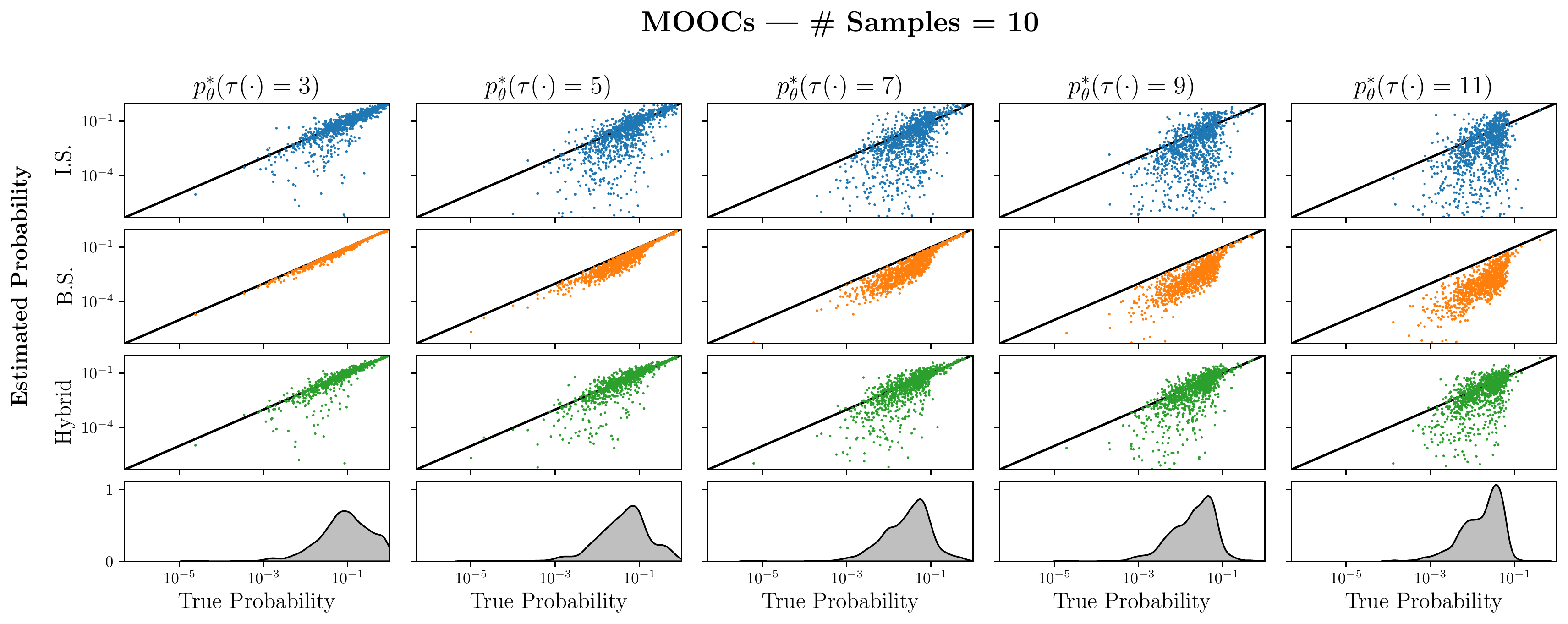}
    \caption{\textbf{Scatterplots of individual query estimates versus (surrogate) ground truth, computation budget of 10 hybrid samples:}  Comparison of importance sampling (I.S.), beam search (B.S.), and the hybrid method for the MOOCs dataset with the budget determined by the hybrid method using 10 samples. The x-axis corresponds to the surrogate ground truth values for a given query result. Density plots at the bottom are for the surrogate ground truth values. Ground truth values used to determine error in these plots are exact for $K \leq 4$ and approximated otherwise.}
    \label{fig:err_scatter_10}
\end{figure}

\begin{figure}
    \centering
    \includegraphics[width=1.0\textwidth]{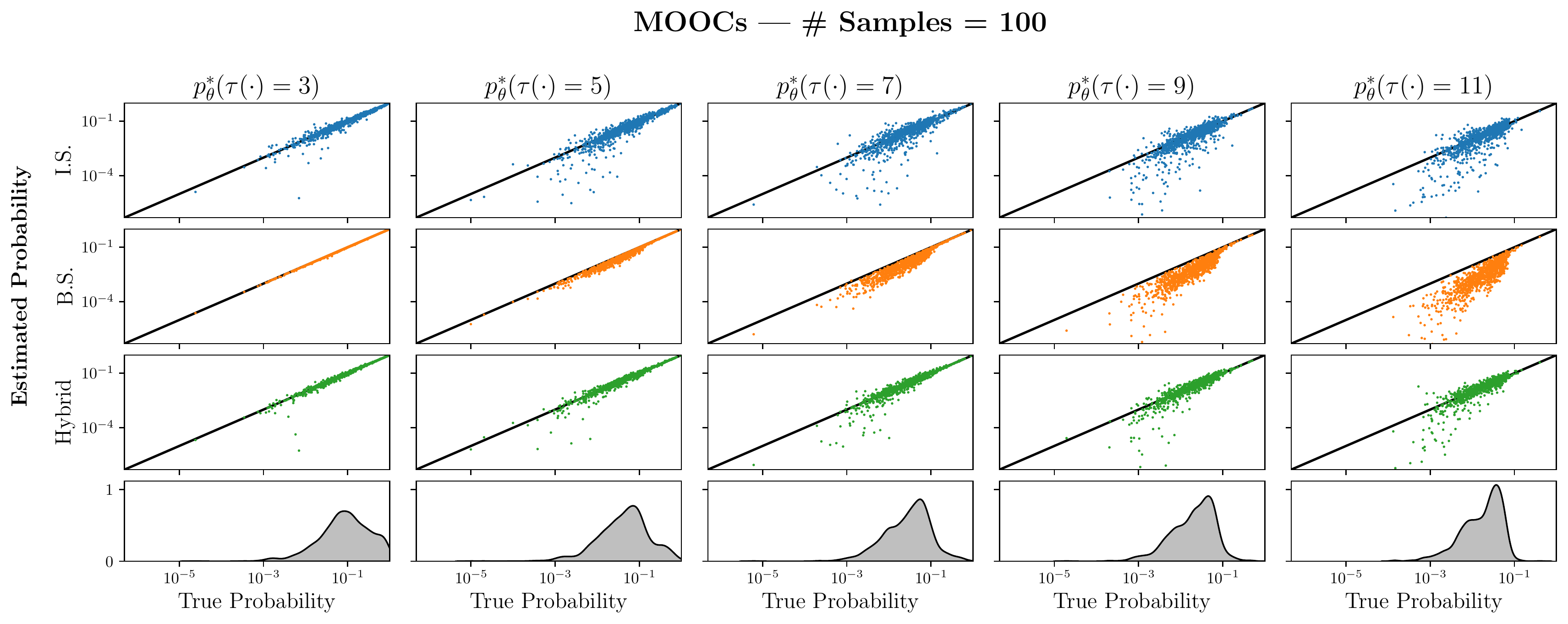}
    \caption{\textbf{Scatterplots of individual query estimates versus (surrogate) ground truth, computation budget of 100 hybrid samples:} Same format as \cref{fig:err_scatter_10}.}
    \label{fig:err_scatter_100}
\end{figure}
\begin{figure}
    \centering
    \includegraphics[width=1.0\textwidth]{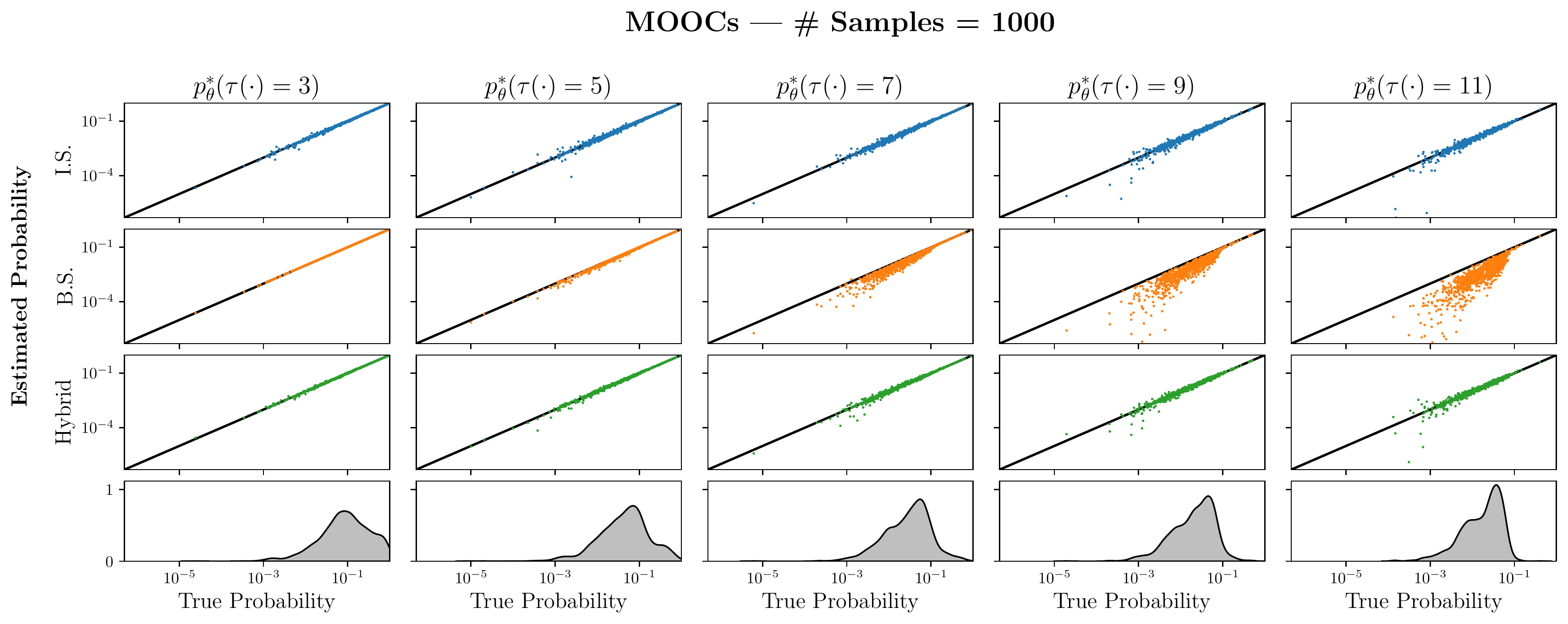}
    \caption{\textbf{Scatterplots of individual query estimates versus (surrogate) ground truth, computation budget of 1000 hybrid samples:} Same format as \cref{fig:err_scatter_10}.}
    \label{fig:err_scatter_1000}
\end{figure}

\subsection{Query Estimation Error as a Function of Computation Budget}

In addition to identifying optimal query estimation methodologies for a low and fixed computation budget, we also explore the impact of increasing computation budget for query lengths $K=3,7,11$, roughly corresponding to short, medium, and long horizon queries. These experiments are conducted in the same manner as the query estimation experiments with a fixed model budget, but are then repeated for many different budgets derived from $S=10,30,50,100,300,500,1000,3000,5000,10000$ hybrid samples. The intention with these experiments is to observe if any query estimation methods disproportionately benefit from increased computation and exhibit behavior that was not present at lower computation budgets. We also include two additional baselines. The full set of methods explored is listed below.
\begin{enumerate}
  \item Importance sampling (informative proposal distribution $q$ derived from model $p_\theta$)
  \item Beam search 
  \item Hybrid search and sampling 
  \item Monte-Carlo sampling with a uniform proposal distribution
  \item Naive model sampling (direct MC sampling for $\E_{p_\theta^*} \ind(X_{1:K}\in\mathcal{Q})$)
\end{enumerate}

As a clarifying point, naive sampling is conducted by sampling sequences from the model and determining if they fall within the query set $\mathcal{Q}$; the proportion of samples that exist in $\mathcal{Q}$ serves as a naive means of determining the query estimate in question. In addition, Monte-Carlo sampling with a uniform proposal distribution samples sequences in $\mathcal{Q}$ uniformly and the estimates the query probability for that sample. These two methods were not included in the main paper due to their consistently poor performance.

\begin{figure}
    \centering
    \includegraphics[width=1.0\textwidth]{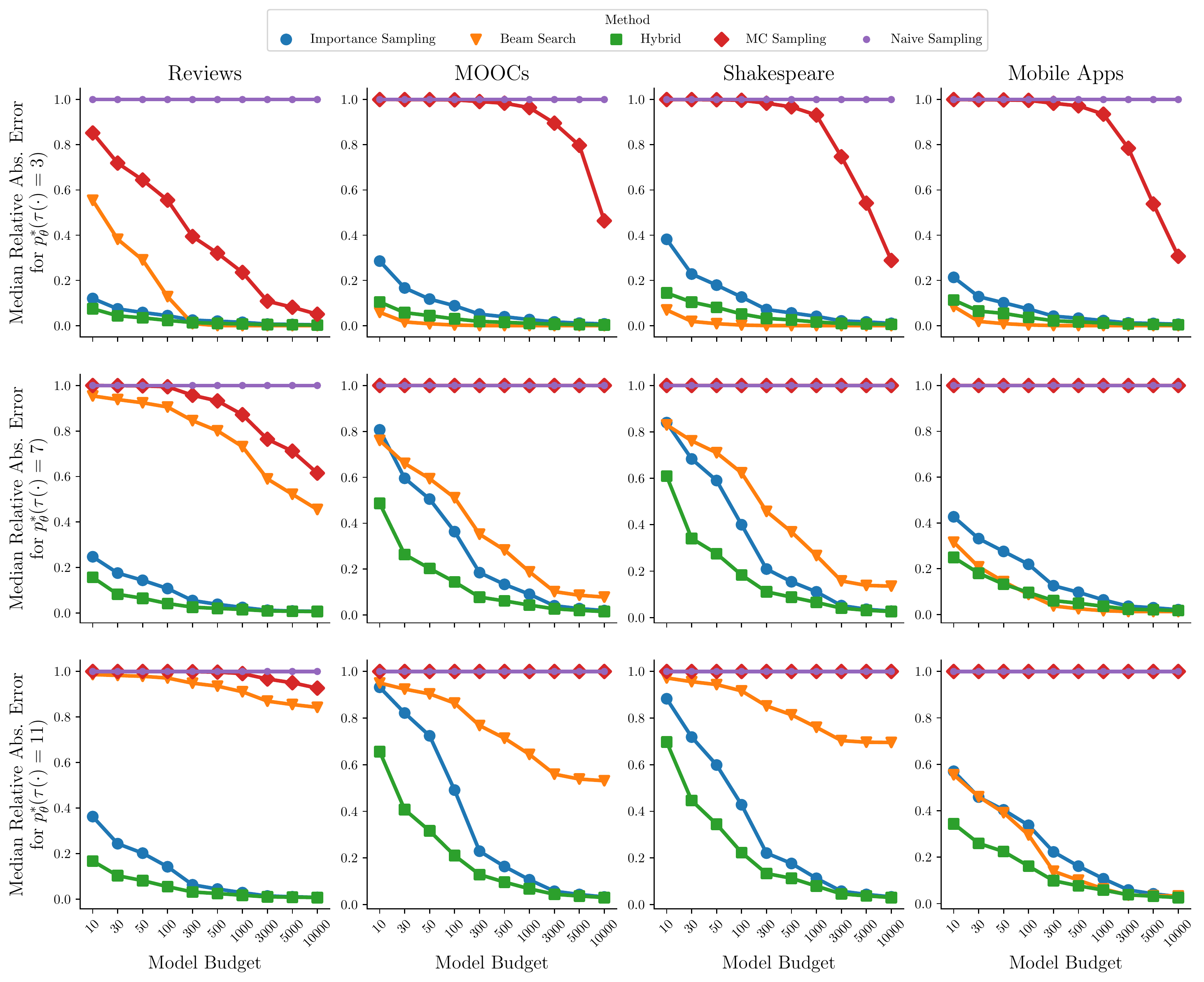}
    \caption{\textbf{Median estimation error versus model budgets:}  Median relative absolute error (RAE) for importance sampling, beam search, the hybrid method, MC sampling, and naive sampling across three different queries, $p^*_\theta(\tau(\cdot)=k)$, for $k=3,7,11$, over all four main datasets, as a function of different model budgets. For cases where the MC sampling results are not visible (e.g., see rightmost plot on the bottom row), the results coincide with the naive sampling results. Ground truth values used to determine error in these plots are exact for $K \leq 4$ and approximated otherwise.}
    \label{fig:budget}
\end{figure}

\begin{figure}
    \centering
    \includegraphics[width=1.0\textwidth]{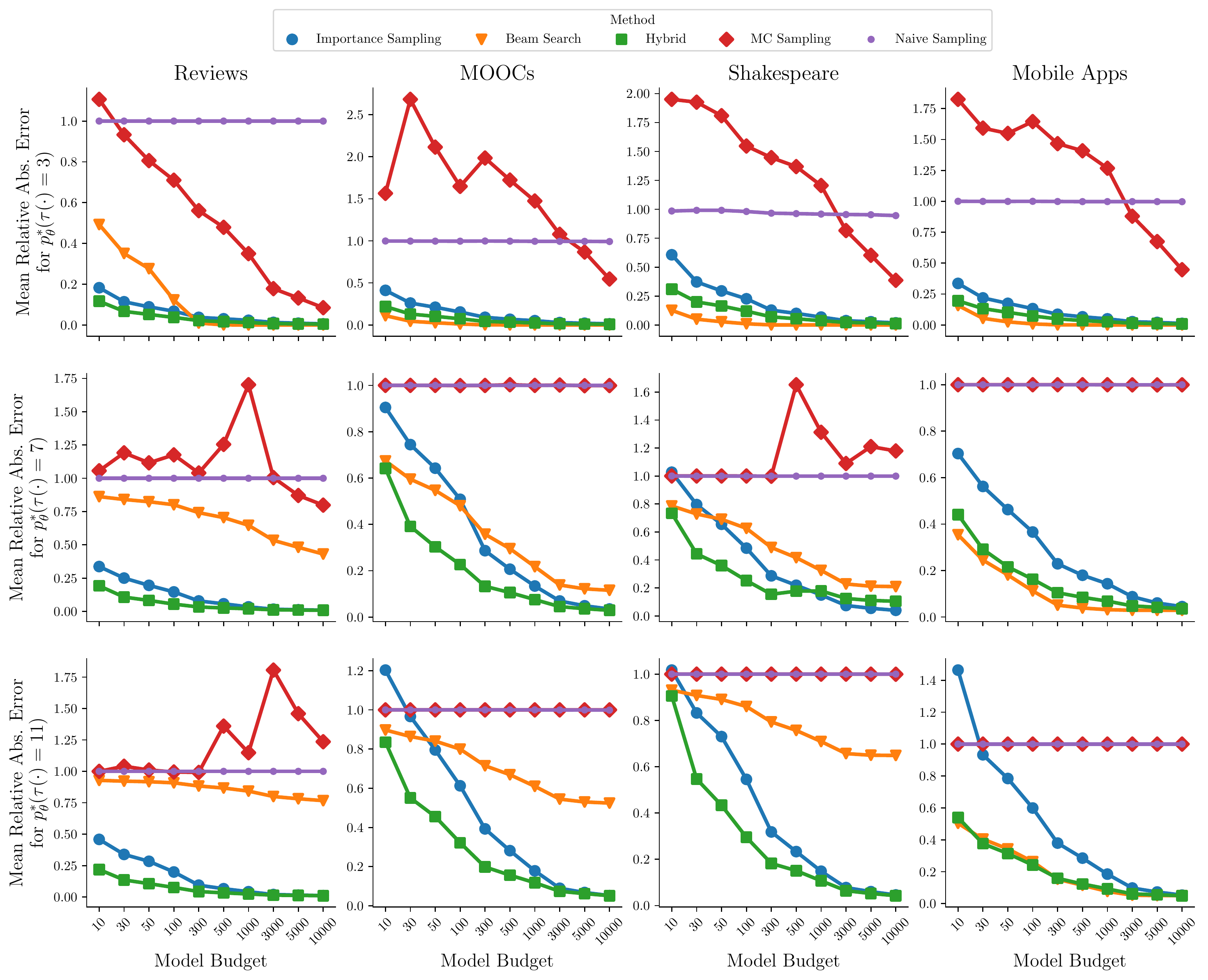}
    \caption{\textbf{Mean estimation error versus model budgets:} Same format as \cref{fig:budget}. Ground truth values used to determine error in these plots are exact for $K \leq 4$ and approximated otherwise.}
    \label{fig:med_budget}
\end{figure}

\cref{fig:budget} (median error) and \cref{fig:med_budget} (mean error), show that increasing the computation budget by an order of magnitude roughly corresponds to a three times reduction in RAE for both importance sampling and the hybrid method.  Naive sampling sees almost no benefit from the increased budget regardless of the size of the query path space. Monte Carlo sampling sees some reduction in error from  increased computation budget for some configurations, but also often sees no benefit. 

For the provided budgets, the query estimates resulting from naive model sampling are consistently 0, resulting in an RAE of 1. While naive model sampling can be useful in some contexts in general,  these results indicate that it is not well-suited for estimating query probabilities. This is likely because many of the ground truth probabilities for these queries have values on the order of $10^{-1}$ or smaller. 
For queries that are highly unlikely under the model, the probability of even a single sampled sequence belonging to $\mathcal{Q}$ is very low. 

Monte-Carlo sampling also includes high error estimates, but for a different reason. Since the Monte-Carlo estimate can be decomposed into an expectation over $p(X_{1:K}=x_{1:K}, x_{1:K} \in \mathcal{Q})$ that is then re-scaled by $|\mathcal{Q}| \gg 0$, the scaling term magnifies any error in the expectation dramatically, inducing extremely high variance and the potential to produce query estimates that exceed 1. This high variance can persist even for high computation budgets. By contrast, beam search improves with an increased computational budget, but only as a function of the total path space. The larger the path space and the higher the entropy of the distribution, the worse the beam search estimates are as measured by RAE.

\begin{figure}
    \centering
    \includegraphics[width=\textwidth]{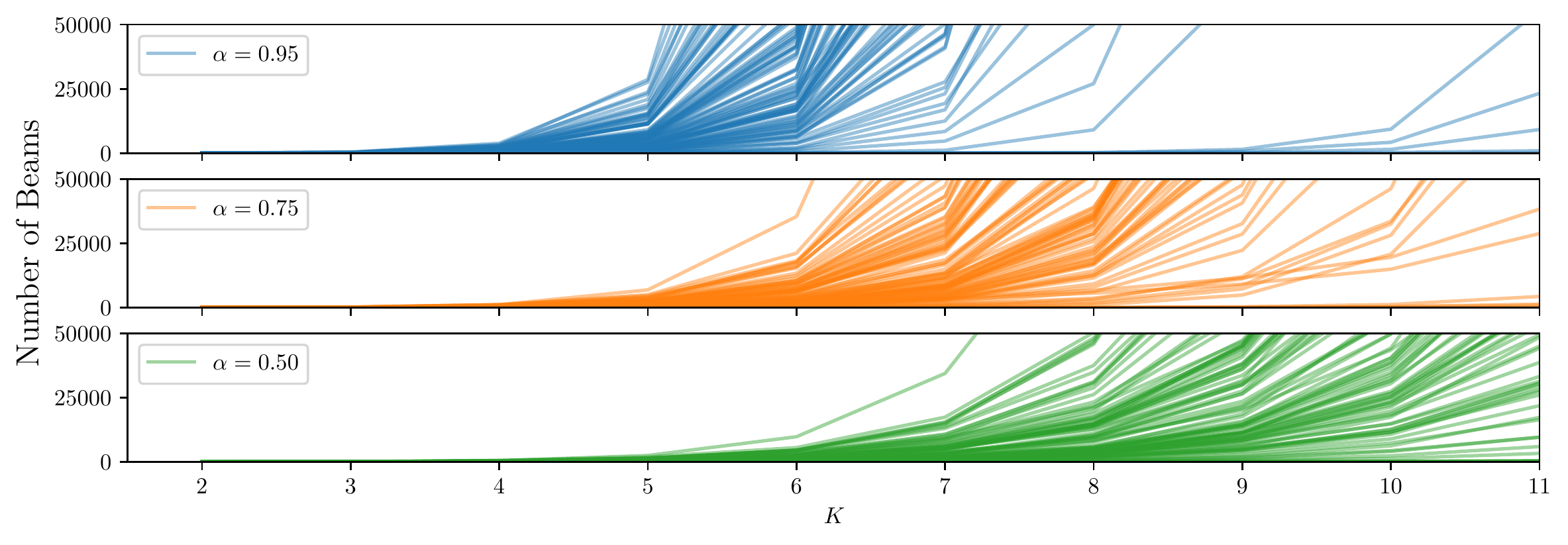}
    \caption{Number of beams as a function of $K$ steps into decoding sequences for coverage-based beam search estimating $p^*_\theta(X_K)$ with $\alpha\in\{0.95, 0.75, 0.5\}$ over a variety of different starting histories in the Shakespeare dataset.}
    \label{fig:coverage}
\end{figure}

\subsection{Coverage-based Beam Search Ablation}%
\label{sub:coverage_bs_ablation}
As described in the main paper, a variant of beam search well-suited to query estimation is \textit{coverage-based beam search}. However, this method was not explored further in our analysis as it could not scale to non-trivial query types and large query path spaces. Depicted in \cref{fig:coverage}, the minimum number of beams needed to cover 50\%, 75\%, and 90\% of the query path space increases exponentially with the query length $K$. Though coverage-based beam search comes with desirable coverage guarantees and is a consistent (albeit biased) query estimator, naively applying coverage-based beam search to queries of practical interest is ill-posed and computationally intractable. However, the hybrid method preserves some of beam search's desirable traits while making it suitable for queries of practical interest. See \cref{sec:hybrid_details} for more information.

\subsection{Long-horizon Query Estimation}

The empirical results of our experiments tell us much of the abilit, but we still do not have clarity on the extent to which importance sampling can yield low-error estimates in the limit of large sequence lengths. To that end, we conducted an experiment to directly assess the performance of importance sampling when estimating long horizons. Our primary conclusion was that (i) with reasonable sample sizes the error remains below 25\%, even at $K=100$, and (ii) the error does not increase substantially as $K$ increases beyond $K=5$.

In more detail, a set of 100 history sequences $\mathcal{H}$ of length 5 was collected from each dataset. We then compute ground truth or pseudo ground truth (PGT) for query 2 (probability of event at $K$ without restriction) over sequence lengths $K =[2, 5, 10, 20, 40, ..., 100]$. Pseudo-ground truth estimates are generated with a tolerance $\delta=1e^{-7}$ and a maximum sampling budget of $250,000$. The same estimates are then computed using importance sampling with sampling budgets $[10, …, 10000]$. Similar to the experiments in Section \cref{sec:experiments}, we aggregate and present the results as the median relative absolute error, shown in the tables below. Since we are not focusing on a specific event type  of interest, this median also marginalizes over all event types  in the vocabulary as well as the sampled query sequences. All results are reported as percentages

\begin{table}[!ht]
    \centering
    \begin{tabular}{l|cccccccc}
    \toprule
        \# Samples & 2 & 5 & 10 & 20 & 40 & 60 & 80 & 100 \\
        \midrule
        10 & 31.35 & 40.33 & 44.39 & 43.48 & 45.98 & 45.84 & 49.44 & 50.05 \\ 
        100 & 13.29 & 17.12 & 18.73 & 19.46 & 20.41 & 21.65 & 21.82 & 21.86 \\ 
        1000 & 4.94 & 6.38 & 7.06 & 7.33 & 7.26 & 6.99 & 7.13 & 6.82 \\ 
        10000 & 1.60 & 2.05 & 2.21 & 2.34 & 2.20 & 2.13 & 2.06 & 2.10 \\ 
    \bottomrule
    \end{tabular}
    \caption{Median RAE across query estimations methods (1000 samples) and query horizons $K=2,5,10,20, ..., 100$ and sample budgets $10, 100, 1000, 10000$ for Amazon Reviews.}
    \label{table:long_horizon_amazon_reviews}
\end{table}

\begin{table}[!ht]
    \centering
    \begin{tabular}{l|cccccccc}
    \toprule
        \# Samples & 2 & 5 & 10 & 20 & 40 & 60 & 80 & 100 \\ 
        \midrule
        10 & 66.21 & 84.93 & 92.92 & 96.96 & 98.41 & 99.00 & 99.19 & 99.32 \\ 
        100 & 62.36 & 80.94 & 89.79 & 94.00 & 96.16 & 97.05 & 97.38 & 97.66 \\ 
        1000 & 47.96 & 59.14 & 69.10 & 75.45 & 74.79 & 71.34 & 65.12 & 59.75 \\ 
        10000 & 15.21 & 20.51 & 23.38 & 24.00 & 20.66 & 18.78 & 16.48 & 15.55 \\ 
        \bottomrule
    \end{tabular}
    \caption{Median RAE across query estimations methods (1000 samples) and query horizons $K=2,5,10,20, ..., 100$ and sample budgets $10, 100, 1000, 10000$ for Mobile Apps.}
    \label{table:long_horizon_apps}
\end{table}

\begin{table}[!ht]
    \centering
    \begin{tabular}{l|cccccccc}
    \toprule
        \# Samples & 2 & 5 & 10 & 20 & 40 & 60 & 80 & 100 \\  
        \midrule
        10 & 79.84 & 79.99 & 83.53 & 85.29 & 83.53 & 84.00 & 85.82 & 85.15 \\ 
        100 & 28.66 & 32.85 & 39.25 & 40.66 & 38.78 & 41.51 & 39.71 & 40.07 \\ 
        1000 & 8.61 & 10.99 & 13.29 & 13.52 & 13.78 & 13.89 & 13.64 & 14.10 \\ 
        10000 & 2.77 & 3.48 & 4.28 & 4.34 & 4.32 & 4.33 & 4.34 & 4.30 \\ 
    \bottomrule
    \end{tabular}
    \caption{Median RAE across query estimations methods (1000 samples) and query horizons $K=2,5,10,20, ..., 100$ and sample budgets $10, 100, 1000, 10000$ for Shakespeare.}
    \label{table:long_horizon_shakespeare}
\end{table}

\begin{table}[!ht]
    \centering
    \begin{tabular}{l|cccccccc}
    \toprule
        \# Samples & 2 & 5 & 10 & 20 & 40 & 60 & 80 & 100 \\ 
        \midrule
        10 & 89.62 & 97.30 & 98.73 & 99.19 & 99.08 & 99.06 & 98.91 & 98.87 \\ 
        100 & 63.80 & 83.34 & 84.77 & 79.15 & 66.89 & 62.99 & 60.28 & 61.30 \\ 
        1000 & 30.39 & 43.37 & 36.32 & 26.07 & 19.78 & 18.11 & 17.53 & 17.38 \\ 
        10000 & 11.92 & 15.15 & 11.35 & 7.86 & 5.90 & 5.66 & 5.43 & 5.40 \\
        \bottomrule
    \end{tabular}
    \caption{Median RAE across query estimations methods (1000 samples) and query horizons $K=2,5,10,20, ..., 100$ and sample budgets $10, 100, 1000, 10000$ for MOOCs.}
    \label{table:long_horizon_moocs}
\end{table}

In general, we see (not surprisingly) that the increase in sequence length leads to a consistent and non-trivial increase in error for most sampling budgets. In addition, as expected the increase in sampling budget consistently reduces the query estimation error. However, we do witness the interesting phenomenon that the error occasionally \textit{decreases} as the sequence length increases. We conjecture that this may be happening because as the sequence length increases, the relevance of the history context $\mathcal{H}$ decreases and the distribution may regress to a base stationary distribution (as if no history context were provided at all) indicating that the conditional model entropy may be the main driving factor in estimation complexity. This intuition is further supported by the fact that in many datasets, budgets exist where query estimation error first increases but then begins to decrease again. Regardless, as the largest budget of $10,000$ we witness that median RAE remains at or under 25\% in all cases, often significantly so.

\subsection{Query Estimation with Large-Scale Language Models}%
\label{sec:gpt_exp}

In order to explore the feasibility of applying our query estimation methods to real-world sequence data, we also analyze a subset of our query estimation methods against GPT-2 and WikiText language data. GPT-2 decomposes English words into $V=50257$ work pieces, a vocabulary over 500 times larger than our other datasets. For this reason, we only conduct experiments using only 100 sequence histories per dataset due to computational limitations. For the same reason, we do not explore the hybrid method and restrict ourselves to analyzing the following query estimators:
\begin{enumerate}
  \item Importance sampling (informative proposal distribution $q$ derived from model $p_\theta$)
  \item Beam search 
  \item Monte-Carlo sampling with a uniform proposal distribution
\end{enumerate}

\begin{table}
\centering
\begin{tabular}{cccccc}
\toprule
K &  Importance Samp. &  Beam Search &  MC Samp. &  Entropy &   Entropy \% \\
\midrule
         3 &                     \textbf{11.41} &              51.25 &                  99.36 &    12.89 &        41.28 \\
        4 &                     \textbf{13.35} &              82.42 &                  99.95 &    19.09 &        40.74 \\
        5 &                      \textbf{13.53} &              93.59 &                  99.99 &    25.23 &        40.38 \\
\bottomrule
\end{tabular}
\caption{\textbf{Median RAE across query estimations methods (1000 samples) and query horizons $K=3,4,5$ for GPT-2 and Wikitext}. Entropy values estimated as the mean (over 100 queries) of the restricted proposal $q$ and entropy \% is the entropy in percentage relative to its potential maximum value $Klog(V)$.}
\label{table:wikitext_and_entropy}
\end{table}

Fixed-budget query experiments with GPT-2 are conducted identically to those on the other 4 datasets. We find query estimation error closely mirrors the results we see in datasets with smaller vocabulary sizes, suggesting our findings may generalize well to practical domains. Our analysis is reported in \cref{table:wikitext_and_entropy} and includes estimates of the restricted model entropy $H(q)$ for different query lengths $K$, with the entropy increasing much faster than small-vocabulary models, as expected. With that said, there is still much exploration to be done on large-scale sequence models and is a promising avenue for future work.

\begin{figure}
    \centering
    \includegraphics[width=\textwidth]{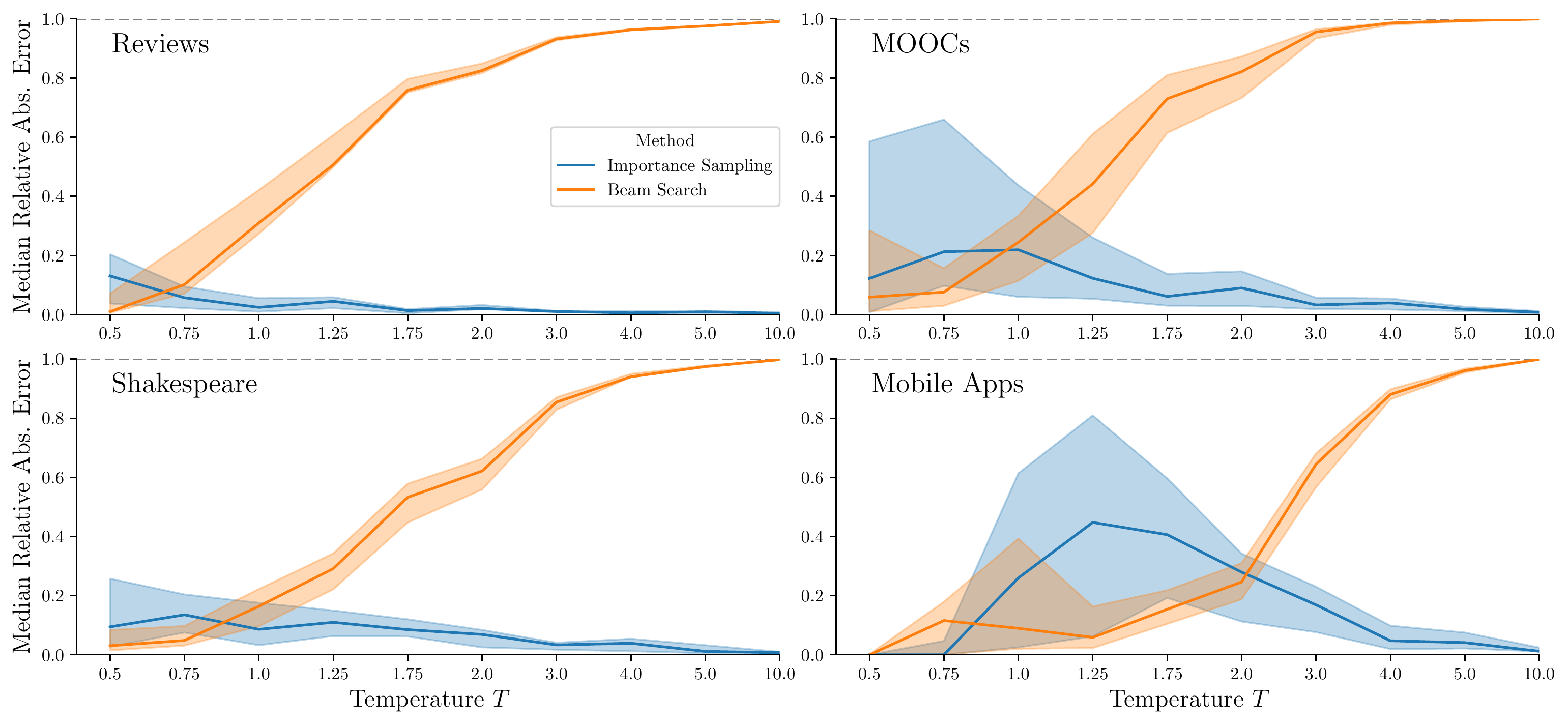}
    \caption{\textbf{Error as a function of entropy:} Median relative absolute error for estimating $p^*_{\theta,T}(\tau(\cdot)=4)$ over all four datasets for a variety of imposed temperatures $T$. Shaded regions indicate interquartile ranges.}
    \label{fig:temp_all_datasets}
\end{figure}

\subsection{Entropy Relationship with Query Estimation Error}

Fig. 4(b) in the main paper demonstrated how indirectly controlling the entropy of a given model through an applied temperature affected the performance of beam search and importance sampling. In \cref{fig:temp_all_datasets} we can see similar plots for all of four of our main datasets.

\subsection{Investigation of Query 4 (``A" before ``B")}

Most of our analysis was conducted on hitting time queries, and justifiably so as more advanced queries decompose into individual operations on hitting times. This includes Q4, colloquially stated as the probability an item from token set $A$ occurs before an item in token set $B$. More formally:
\begin{align*}
p_\theta^*(\tau(A) < \tau(B) ) & = \sum_{k=1}^\infty p_\theta^*(\tau(A)=k, \tau(B)>k) \\
& = \sum_{k=1}^\infty \sum_{a\in A}p_\theta^*(X_k=a, X_{<k}\in (\V\setminus(A\cup B))^{k-1}) 
\label{eq:hit_comp_sum}
\end{align*}
While this cannot be computed exactly, a lower bound can. The other option is to produce a lower bound on this expression by evaluating the sum in \cref{eq:hit_comp_sum} for the first $K$ terms. We can achieve error bounds on this estimate by noting that $p_\theta^*(\tau(a) < \tau(b) ) + p_\theta^*(\tau(a) > \tau(b) ) = 1$. As such, if we evaluate \cref{eq:hit_comp_sum} up to $K$ terms for both $p_\theta^*(\tau(a) < \tau(b) )$ and $p_\theta^*(\tau(b) < \tau(a) )$, the difference between the sums will be the maximum error either lower bound can have. This difference will be referred to as \emph{unaccounted probability} and will approach 0 as $K\rightarrow\infty$.

A natural question to ask is what is the minimum value of $K$ sufficient to compute these lower bounds to in order to have negligible unaccounted probability. Though this will surely vary based on the entropy of the model and the specific query in question, we explore this question across all datasets except WikiText and note some general trends.

\begin{figure}
    \centering
    \includegraphics[width=\textwidth]{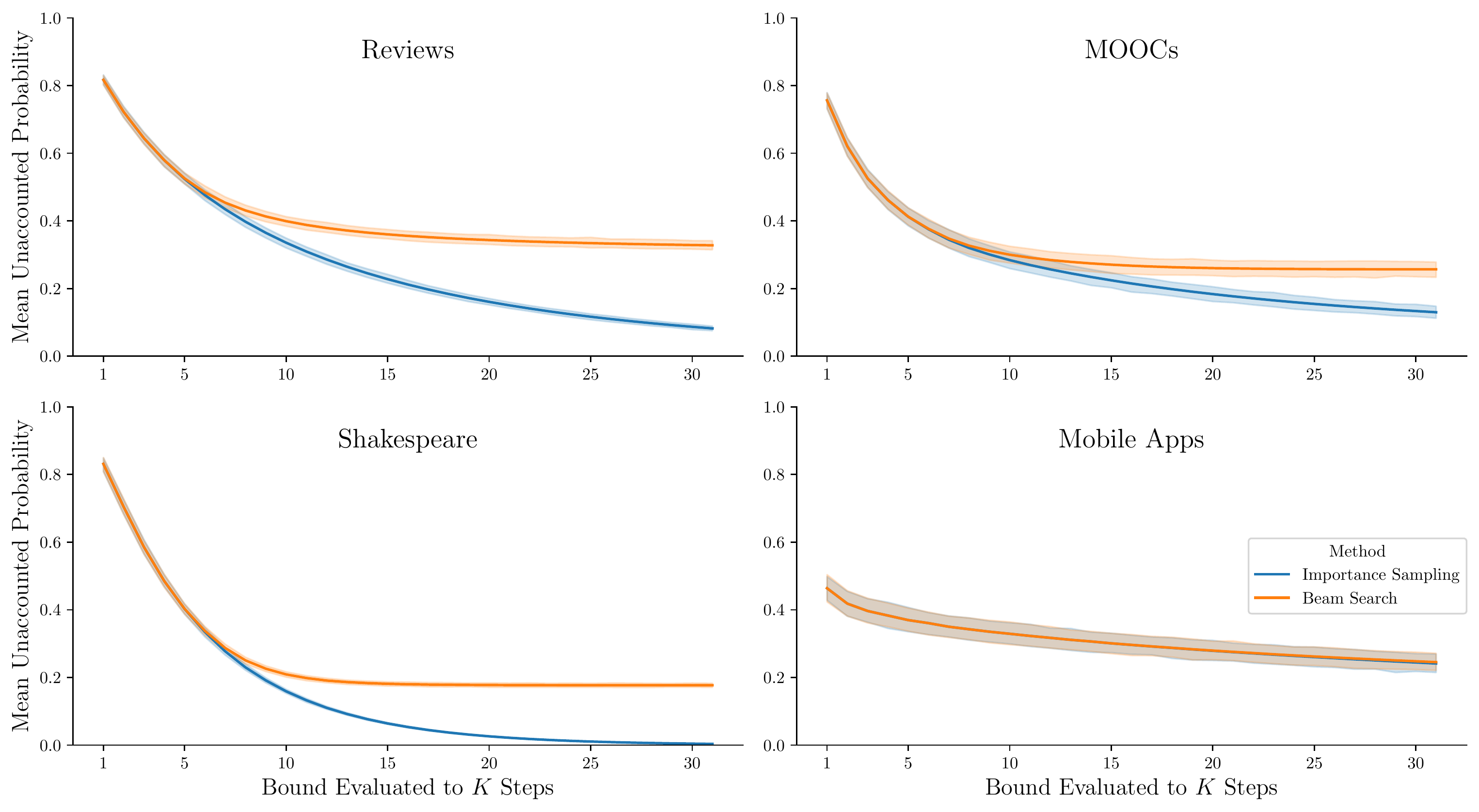}
    \caption{Mean unaccounted probability ($1-(\hat{p}^*_\theta(\tau(A)<\tau(B)) + \hat{p}^*_\theta(\tau(B)<\tau(A)))$) when evaluating the pair of queries for $\tau(A) < \tau(B)$ and $\tau(B) < \tau(A)$ up to $K$ steps into the future over all four main datasets. Shaded regions indicate 99\% confidence intervals and estimates were computed with a fixed model budget based on 1000 samples.}
    \label{fig:a_before_b_investigation}
\end{figure}

\cref{fig:a_before_b_investigation} plots the unaccounted probability $1-\hat{p}^{*}_\theta(\tau(A) < \tau(B))-\hat{p}^*_\theta(\tau(A)>\tau(B))$ as a function of query length $K$. We observe that for many datasets, a query horizon of $k=30$ is largely sufficient to reduce the remaining probability to under 10\%. One notable exception is the Mobile Apps dataset, which, due to its lower entropy and high self-transition rate, maintains a much longer query horizon. This discovery implies that a successful partition of probability space with a given Q4 query can be sensitive to the model distribution, but also that approximate partitions are possible for relatively low values of $K$. 

\subsection{Linearly Compounding Errors in Complex Query Estimation}

As mentioned in Section \cref{sec:experiments} of the main paper, hitting time queries can often be seen as components of more involved queries, such as “a” before “b” queries $p_\theta^*(\tau(a) < \tau(b))$ or counting-style queries $p_\theta^*(N_a(K)=n)$. These queries can be rewritten as summations of more basic hitting time queries. For instance, $p_\theta^*(\tau(a) < \tau(b)) = \sum_{k=1}^\infty p_\theta^*(\tau(a)=k,  \tau(b)>k)$. In practice, each summand probability is estimated using our proposed techniques so it can be seen that the error compounds additively with respect to the different basic hitting time queries. 

More generally, our framework proposes representing general queries as $\mathcal{Q} = \cup_i \mathcal{Q}_i = \cup_i \prod_j \mathcal{V}_j^{(i)}$ such that the $\mathcal{Q}_i$’s form a minimal partition on $\mathcal{Q}$. With this representation, we can see that for an arbitrary query, the error when estimating will compound additively and scale linearly with respect to the number of different $\mathcal{Q}_i$. Note that the actual values of $p_\theta^*(x\in\mathcal{Q}_i)$ do have an impact on the errors when estimating due to the values $\in[0,1]$. Lastly, as mentioned in the paper we can additionally control this error either by utilizing the coverage-based beam search, or by leveraging the Central Limit Theorem with importance sampling.

\subsection{Qualitative Exploration and Practical Applications}%

In addition to systematic quantitative analysis of query estimators, we also qualitatively explore specific applications of our methods that lend practical insights. First, we consider the question of ``given a partial sentence, predict when the sentence will end". 
Using our query estimation methods, we can not only answer this question with relatively low error but also effectively re-use intermediate computational results. 
This necessarily correlates query estimates over steps $K$, but this confers little negative impact upon the analysis since our sampling methods are unbiased estimators and beam search, though biased, offers a deterministic lower bound. The results of our analysis are seen in Fig. 1 
in the main paper and align with basic intuition about English sentences, with open ended prefixes possessing a long-tailed end-of-sentence probability distribution relative to more structured and declarative phrases.

\begin{figure}
    \centering
    \includegraphics[width=\textwidth]{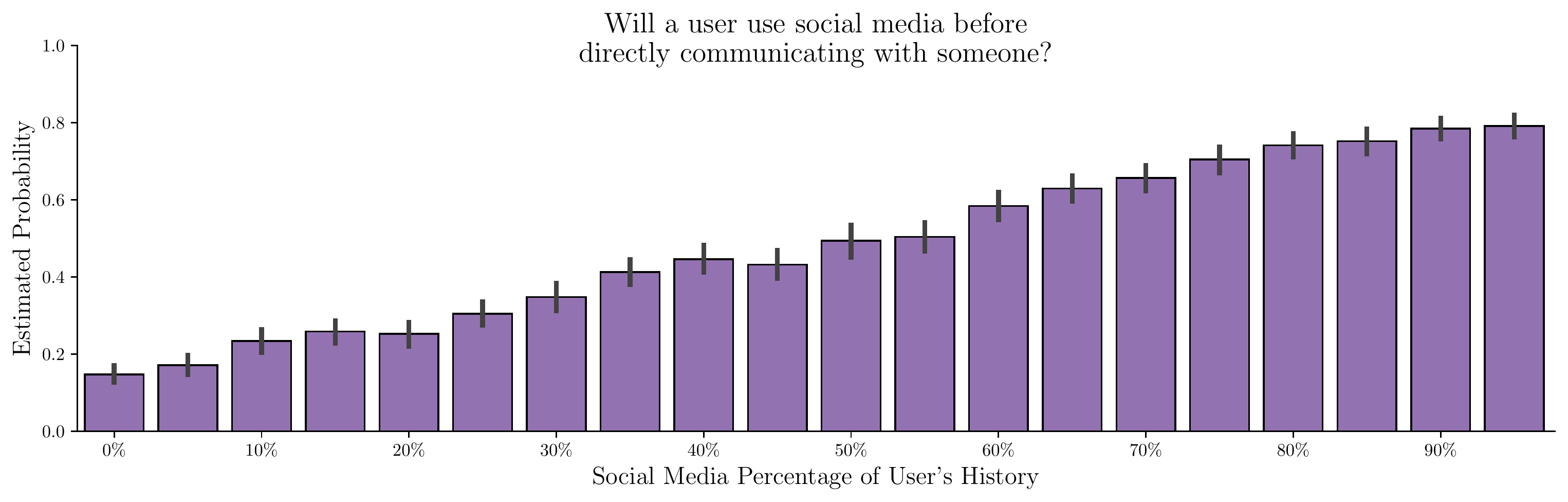}
    \caption{A case study of evaluating how likely it is that a mobile app user will use social media before they directly communicate with someone via text messaging, email, phone call, etc. as a function of how much of their recent history consists of social media usage. This is a $p^{*}_\theta(\tau(A)<\tau(B))$ type query that has been estimated and averaged over 500 different histories for each social media history percentage. Error bars indicate 90\% confidence intervals.}
    \label{fig:flashy_ab}
\end{figure}

A second practical question that can be asked with our query estimation methods utilizes the Mobile Apps dataset: given someone's mobile usage history, predict what will occur first: the individual will go on social media or directly interact with someone via video chat, call, text, or email. This question is a practical application of query 4, $p^{*}_\theta(\tau(a) < \tau(b))$. Other, equally interesting equivalents of this question include ``will an online shopper purchase something before leaving the website?". Below, we have conducted an experiment where we synthetically generate mobile app behavior histories with specific percentages of social media activity present. By computing these queries over several histories and then averaging the estimates, we obtain the results seen in \cref{fig:flashy_ab}. As expected, we see a clear linear trend between a user's social media usage and the likelihood they will return to it before conducting other tasks like directly communication. Such a result, though contrived and purely demonstrative in this setting, could be applied to many practical applications that analyze the characteristics of online user behavior.
\end{document}